
\documentclass{article}

\usepackage{algorithm}
\usepackage{algorithmic}
\usepackage{microtype}
\usepackage{graphicx}
\usepackage{subfigure}
\usepackage{booktabs} 
\usepackage{amsmath}
\usepackage{amsfonts}
\usepackage{amsthm}
\usepackage{fancyhdr}
\usepackage{color}
\usepackage{eso-pic} 
\usepackage[toc,page]{appendix}
\newtheorem{remark}{Remark}

\newtheorem{theorem}{Theorem}

\newtheorem{lemma}{Lemma}

\tolerance=1
\emergencystretch=\maxdimen
\hyphenpenalty=10000
\hbadness=10000

\usepackage{hyperref}



\usepackage[accepted]{icml2019}

\icmltitlerunning{PrivGAN: Protecting GANs from membership inference attacks at low cost}

\begin{document}

\twocolumn[
\icmltitle{PrivGAN: Protecting GANs from membership inference attacks at low cost}




\begin{icmlauthorlist}
\icmlauthor{Sumit Mukherjee}{too}
\icmlauthor{Yixi Xu}{too}
\icmlauthor{Anusua Trivedi}{too}
\icmlauthor{Nabajyoti Patowary}{wan}
\icmlauthor{Juan Lavista Ferres}{too}
\end{icmlauthorlist}

\icmlaffiliation{too}{AI for Good Research Lab, Microsoft, Redmond, USA}
\icmlaffiliation{wan}{Microsoft, Redmond, USA}

\icmlcorrespondingauthor{Juan Lavista Ferres}{jlavista@microsoft.com}

\icmlkeywords{Membership inference, GANs}

\vskip 0.3in
]



\printAffiliationsAndNotice{}  

  \begin{abstract}
{Generative Adversarial Networks (GANs) have made releasing of synthetic images a viable approach to share data without releasing the original dataset. It has been shown that such synthetic data can be used for a variety of downstream tasks such as training classifiers that would otherwise require the original dataset to be shared. However, recent work has shown that the GAN models and their synthetically generated data can be used to infer the training set membership by an adversary who has access to the entire dataset and some auxiliary information. Current approaches to mitigate this problem (such as DPGAN~\cite{xie2018differentially}) lead to dramatically poorer generated sample quality than the original non--private GANs. Here we develop a new GAN architecture (privGAN), where the generator is trained not only to cheat the discriminator but also to defend membership inference attacks. The new mechanism is shown to empirically provide protection against this mode of attack while leading to negligible loss in downstream performances. In addition, our algorithm has been shown to explicitly prevent memorization of the training set, which explains why our protection is so effective.  The main contributions of this paper are: i) we propose a novel GAN architecture that can generate synthetic data in a privacy preserving manner with minimal hyperparameter tuning and architecture selection, ii) we provide a theoretical understanding of the optimal solution of the privGAN loss function, iii) we empirically demonstrate the effectiveness of our model against several white and black--box attacks on several benchmark datasets, iv) we empirically demonstrate on three common benchmark datasets that synthetic images generated by privGAN lead to negligible loss in downstream performance when compared against non--private GANs. While we have focused on benchmarking privGAN exclusively on image datasets, the architecture of privGAN is not exclusive to image datasets and can be easily extended to other types of datasets.}
\end{abstract}



\section{Introduction}
Much of the recent progress in machine learning and related areas has been strongly dependent on the open sharing of datasets. A recent study shows that the increase in the number of open datasets in biology has led to a strongly correlated  increase in the number of data analysis papers~\cite{piwowar2013data}. Moreover, in many specialized application areas, the development of novel algorithms is contingent upon the public availability of relevant data. While the public availability of data is essential for reproducible science, in the case of sensitive data, this poses a possible threat to the privacy of the individuals in the dataset. 

One way in which privacy of samples in a dataset can be compromised is through membership inference attacks. Membership inference attacks are adversarial attacks where the goal of the adversary is to infer whether one or more samples are a part of a dataset without having explicit access to the dataset. There has been a lot of work on developing membership inference attacks against predictive machine learning models using their outputs~\cite{shokri2017membership, long2018understanding, song2019membership, truex2019demystifying}. Much of this work has focused on exploiting information leakage due to overfitting in many machine learning models~\cite{yeom2018privacy}. These approaches have been shown to be extremely effective against common machine learning models and have given rise to machine learning methods that are specifically designed to be resistant to such attacks~\cite{nasr2018machine, jia2019memguard,li2020membership,salem2018ml}.

There has recently been a surge of interest in using synthetic data generated from generative models as a privacy preserving way to share datasets~\cite{han2018gan,yi2019generative,zheng2018detection}. While this is an appealing approach, it has been shown that generative models such as GANs are also prone to memorizing their training set~\cite{liu2018generative}. This has been exploited in several recent papers to explore the vulnerability of generative models to membership inference attacks~\cite{hayes2019logan,chen2019gan,hilprecht2019monte}. \cite{hayes2019logan} designed a white--box attack on the released discriminator of a GAN and showed that it can be almost 100\% accurate in some cases. They also designed a black--box attack, which is comparatively a lot less accurate.  \cite{hilprecht2019monte} designed Monte--Carlo attacks on the generators which are shown to have high accuracy for set membership inference (defined later) and slightly lower accuracy for instance membership inference. 

To address this vulnerability to membership inference attacks, we developed a novel GAN architecture namely priv(ate)GAN to enhance membership privacy: the synthetic data generated by the GAN trained on the training samples should be indistinguishable from those generated by the GAN trained on the other data points from the same distribution. To achieve this, an adversary is trained to attack the model, while the generator is trained to fool both the discriminator and the adversary. We empirically demonstrate the effectiveness of this architecture against attacks described in ~\cite{hayes2019logan,hilprecht2019monte} as well as an oracle attack on the discrminator that we designed. We empirically show that privGAN achieves high membership privacy while not sacrificing the sample quality, compared to the original GANs with identical architecture and hyperparameter settings. Our primary contributions in this paper are: i) proposing a novel privacy preserving GAN architecture which requires \textbf{minimal additional hyperparameter selection and no additional architecture choices}, ii) providing a theoretical analysis of the optimal solution to the GAN minimax problem and demonstrating that with large enough sample size the generative model learned with privGAN is identical to a non--private GAN, iii) empirically comparing the performance of our architecture against baselines on different membership inference attacks (both on generators and discriminators), and iv) empirically comparing the sample quality of our generated samples with different baselines.

\section{Related Works}
\subsection{Membership inference attacks against machine learning models}
A membership attacker aims to infer the membership of samples to a training set. To formally understand the adversary, let us first assume there exists a training set $X_{train}$ drawn from some data distribution $D$, a machine learning model $M$ trained on $X_{train}$ and an adversary $A$ that has access to samples $X_{adv}$ (also drawn from $D$). The adversary is assumed to have some query access to $M$ such that, given a sample $x \in X_{adv}$, it can compute some function $Q(M,x)$. The goal of the adversary is to then estimate $Pr(x \in X_{train})$ as a function of $Q(M,x)$. \cite{shokri2017membership} demonstrated one of the first membership attacks against discriminative models with only black-box access to the confidence scores for each input data sample. Their approach involved first training many 'shadow' machine learning models. Then an adversary model was trained to identify whether a sample was in the training set of any of the shadow models. This MIA approach was shown to be surprisingly effective against a variety of CNNs. Since then, there have been many such empirical membership attacks designed against machine learning models. Recently, ~\cite{yeom2018privacy, jayaraman2020revisiting}  quantified membership privacy leakage as an adversarial game and showed a connection to the generalization error of machine learning algorithms and differential privacy.

There have been several recent papers proposing successful membership inference adversaries against generative models \cite{hayes2019logan,hilprecht2019monte}. Both of these works were motivated by the close connection of information leakage to overfitting. More specifically, the generative models tend to memorize the training samples. The success of such adversaries greatly increases the risk to publish even synthetic datasets. To solve this concern, We propose privGAN, and will show the effectiveness of our method against all of these attacks in Section 6. 

\subsection{Private GANs}
A differentially private algorithm guarantees that the algorithm will yield similar outputs with high probability, no matter whether a sample is included in the training dataset or not~\cite{dwork2014algorithmic}. Existing works  \cite{xie2018differentially,jordon2018pate} propose new algorithms guaranteeing differential privacy of the discriminator (here the output is the model parameters) by the introduction of systematic noise during the model optimization steps~\cite{abadi2016deep}. This also leads to a differentially private generator as differential privacy is immune to post-processing~\cite{dwork2014algorithmic}. As a consequence, it provides a strong protection against membership inference attacks. However, the model has to sacrifice a lot in terms of sample quality and diversity of synthetic images, making them not particularly useful for practical applications. For the specific task of generating class specific images, \cite{torkzadehmahani2019dp} introduces a differentially private extension to conditional GAN~\cite{mirza2014conditional}. This can improve the downstream utility for a limited set of classification tasks. A comprehensive survey of differentially private GANs can be found in~\cite{fan2020survey}.

In addition to differential privacy based strategies, using techniques that improve generalization can also provide membership privacy benefits to generative models. Examples of such techniques are the use of dropout layers~\cite{mou2018dropout} and Wasserstein loss~\cite{arjovsky2017wasserstein}. Unlike these techniques, the privGAN architecture is explicitly designed to maximize the privacy/utility trade-off, which is quantitatively demonstrated in Section 6. 

\section{priv(ate)GANs}
In this section, we will motivate and introduce privGANs. In addition, we will provide the theoretical results for the optimal generator and discriminator.  

\subsection{The non--private GAN}

Before introducing privGANs, let us define the original GAN. In the original (non--private) GAN, the discriminator $D$ and the generator $G$ play a two-player game to minimaximize the value function $V_0(G,D)$:
\begin{align}
\label{eqn:V0}
    \begin{split}
        V_0(G,D) =& \mathbb{E}_{x\sim p_{data}(x)}[\log D(x)]+ \\
    &\mathbb{E}_{z\sim p_{z}(z)}[\log(1-D(G(z)))],
    \end{split}
\end{align}
where $p_z$ is the pre-defined input noise distribution, and $p_{data}$ is the distribution of the real data $X$. Here, the goal of the generator is to learn realistic samples that can fool the discriminator, while the goal of the discriminator is to be able to tell generator generated samples from real ones. The solution to the minimax problem leads to a generator whose generated distribution is identical to the distribution of the training dataset~\cite{goodfellow2014generative}. 

\subsection{Motivation of the privGAN architecture}
Having defined the original GAN loss function, we note that one of its major privacy risks comes from the fact that the model tends to memorize the training samples~\cite{liu2018generative,yeom2018privacy}. This leaves the original GAN vulnerable to some carefully constructed adversaries. For example, consider a situation where the trained GAN model has been open sourced, and a larger pool containing the training set is available to the public. \cite{hayes2019logan} has constructed adversarial attacks that could easily identify the training samples using the observation that the trained discriminator is more likely to identify training samples as 'real' samples than those that were not in the training set. Similarly it has been shown in~\cite{hilprecht2019monte}, that samples generated by trained generators are more similar to images in the training set than those not in the training set which can be utilized by an adversary to perform membership inference.

In the classification setting~\cite{nasr2018machine} demonstrated adversarial regularization as an effective strategy to prevent membership inference attacks. They used a built--in adversary to identify whether a sample was a part of the training set for the classification model or from a hold out set. The classification model eventually learns to make predictions that are ambigious enough to fool the adversary while trying to maintain the classification accuracy. Here, we adapt the same idea to generative models by introducing a built-in adversary which the generator must fool in addition to learning to generate realistic samples. 

\subsection{Membership inference attacks against GANs}
To motivate the built--in adversary, let us first define a generic membership inference adversary for GANs. Given a training dataset $X_{train}$, a discriminator module $D$ and a generator module $G$, the goal of a generic membership inference adversary is to learn the function $f(D,G,x)$ where: 
\begin{equation}
    Pr(x \in X_{train}) = f(D,G,x)
\end{equation}
In the case of attacks that utilize only the discriminator model (e.g. the white--box attack in~\cite{hayes2019logan}), this reduces to the form: 
\begin{equation}
    Pr(x \in X_{train}) = f(D(x))
\end{equation}
Similarly, attacks that rely only on the generator model (e.g. the Monte--Carlo attacks described in~\cite{hilprecht2019monte}) reduce to the form: 
\begin{equation}
    Pr(x \in X_{train}) = f(G,x)
\end{equation}
In the case of attacks on generators, all known attacks (to the best of our knowledge) rely on the distance of synthetic generated samples $G(z)$ with the sample of interest $x$ (using some distance metric). The underlying assumption being that if a large number of synthetic samples have a small distance to $x$, then $x$ was most likely a part of the training set.

The privGAN architecture is motivated by such an adversary.  In addition, we demonstrate through empirical results that a protection against such adversaries also protects against adversaries that target discriminators. The privGAN architecture relies on two tricks to protect against such adversaries: i) random partitioning of the training dataset to train multiple generator--discriminator pairs, ii) a built--in adversary that must be fooled, whose goal is to infer which generator generated a synthetic sample. In the following sub--section we describe the privGAN loss highlighting these different parts. 

\subsection{Some notation relevant to privGAN}
Before we introduce the mathematical formulation of privGAN, let us first list some important notation for ease of reading. Some of these terms will be defined in greater detail later on in the text:   
\begin{itemize}
    \item $N$ denotes the number of generator-discriminator pairs in privGAN
    \item The tuple ($G_i$,$D_i$) denotes the $i$th generator-disciminator pair
    \item $X_1, \cdots, X_N$ denotes the partition of data corresponding to each generator-discriminator pair
    \item $p_i$ refers to the distribution of the $X_i$
    \item $p_z$ is some pre--defined input noise distribution
    \item $D_p$ denotes the internal adversary or the 'privacy discriminator'
    \item $\lambda$ is a hyperparameter that controls the privacy--utility tradeoff in privGAN
    \item $KL(p_a||p_b)$ stands for the KL-divergence between two distributions $p_a$ and $p_b$.
    \item $JSD(p_1,\cdots,p_N)$ is the Jensen--Shannon divergence between the distributions $p_1,\cdots,p_N$. 
\end{itemize}
\subsection{The mathematical formulation of privGAN}

Given an integer valued hyperparameter $N>1$, we randomly divide the training data--set $X$ into $N$ equal sized non--overlapping subsets: $X_1, \cdots, X_N$. Each partition of the data is used to train separate generator--discriminator pairs ($G_i$,$D_i$) hence their cumulative loss is simply the summation of their individual value functions. We further introduce a built--in adversary (called the privacy discriminator) whose goal is to identify which generator generated the synthetic sample. In the case of $N=2$, this is a similar objective to that of the adversary who only utilizes the generator model of a GAN. The loss of the privacy discriminator can then be written as: \begin{equation*}
    R_p(D_p) = \mathbb{E}_{z\sim p_{z}(z)}\log[D_p^i(G_i(z))]
\end{equation*}
where, $D_p(x)=(D_p^1(x),\cdots,D_p^N(x))$ represents the probability of x to be generated by the generator $G_i$ satisfying that  $\sum\limits_{i=1}^N D_p^i(x)=1$. Hence, the complete value function $V_\lambda(\{G_i\}_{i=1}^N,\{D_i\}_{i=1}^N, D_p)$  for a privGAN is defined as: 
\begin{align*}
\begin{split}
        \sum_{i=1}^N &\left\{ \mathbb{E}_{x\sim p_{i}(x)}[\log(D_i(x))] + 
     \mathbb{E}_{z\sim p_{z}(z)}[\log(1-D_i(G_i(z)))]\right.\\
      &\left.+ \lambda \mathbb{E}_{z\sim p_{z}(z)}\log[D_p^i(G_i(z))]\right\},
\end{split}
\end{align*}
where the $p_i$ is the real data distribution of $X_i$ for $i=1,\cdots,N$, $p_z$ is the pre-defined input noise distribution, $\lambda>0$ is a hyperparameter that controls the privacy/utility trade--off. Figure \ref{Arch}A shows an illustration of the privGAN architecture when $N=2$. It is easy to see that the complete value function takes the form :
\begin{align*}
    \sum_{i=1}^N \underbrace{V_0 (G_i,D_i)}_{utility} + \lambda \underbrace{ R_p(D_p)}_{privacy}
\end{align*}
Here, the first term optimizes for 'utility' (sample quality compared to training data partition) whereas the second term optimizes for membership privacy. Accordingly, the optimization problem for privGANs is 
\begin{equation}\label{eqn:optimization}
\min_{\{G_i\}_{i=1}^N} \max_{D_p}\max_{\{D_i\}_{i=1}^N} V_\lambda(\{G_i\}_{i=1}^N,\{D_i\}_{i=1}^N, D_p). 
\end{equation}
\begin{figure*}[h!]
    \centering
    \includegraphics[scale=0.85]{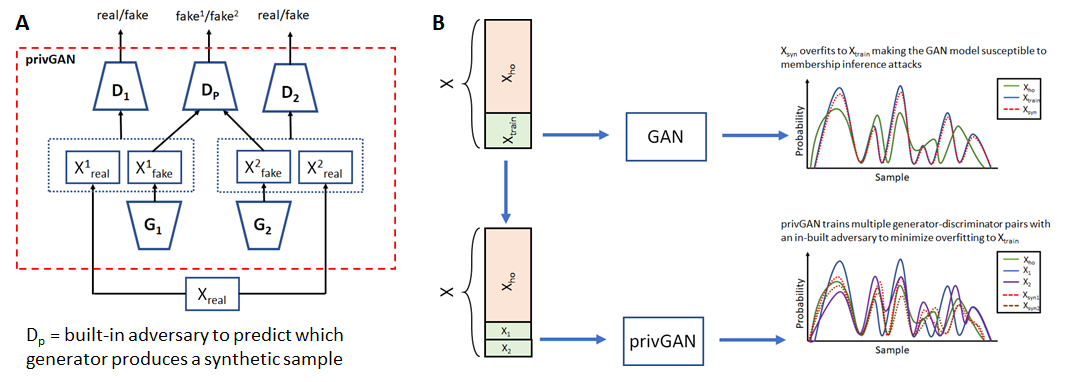}
    \caption{A) The privGAN architecture with 2 generator-discriminator pairs. B) Illustration of how privGAN provides protection against membership inference attacks by preventing memorization of the training set. }
    \label{Arch}
\end{figure*}

In section~\ref{privGAN:optimal} we will obtain the optimal solution to the above stated optimization problem. This result will then be used in section~\ref{privGAN:reg} to show that the the built--in adversary acts as a regularizer which prevents over--fitting to the training set partitions.

\subsection{Theoretical results for privGANs}
\label{privGAN:optimal}
We first provide explicit expressions for the optimal discriminators given the generators. 
\begin{theorem} \label{thm:discrim}Fixing the generators $\{G_i\}_{i=1}^N$ and the hyperparameter $\lambda>0$, the optimal discriminators of Equation \eqref{eqn:optimization} are given by:
\begin{align*}
    &D_i^{*}(x) = \frac{p_i(x)}{p_i(x)+p_{g_i}(x)},  \\
    &(D_p^i)^{*}(x) = \frac{p_{g_i}(x)}{\sum\limits_{j=1}^Np_{g_j}(x)}
\end{align*}
for $i=1,\cdots,N$, where $p_{g_j}$ is the distribution of $G_j(z)$ given $z\sim p_z$ for $j=1,\cdots,N$. 
\end{theorem}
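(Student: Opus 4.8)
The plan is to follow the pointwise optimization strategy of Goodfellow et al., exploiting the fact that in the value function $V_\lambda$ the discriminators decouple: each $D_i$ appears only in the two utility terms indexed by $i$, and the privacy discriminator $D_p$ appears only in the $\lambda$--weighted sum. Since the generators are held fixed, the induced distributions $p_{g_i}$ of $G_i(z)$ are fixed, so I would first rewrite every expectation over $z \sim p_z$ as an expectation (hence an integral) over the corresponding generated distribution, using $\mathbb{E}_{z \sim p_z}[\,h(G_i(z))] = \mathbb{E}_{x \sim p_{g_i}}[\,h(x)]$ for any test function $h$.

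For the optimal $D_i^{*}$, the $i$th utility contribution becomes $\int \big( p_i(x)\log D_i(x) + p_{g_i}(x)\log(1 - D_i(x)) \big)\,dx$. Because the integrand at each $x$ depends only on the scalar value $D_i(x) \in [0,1]$, I would maximize the integral pointwise. The scalar map $y \mapsto a\log y + b\log(1-y)$ attains its unique maximum on $(0,1)$ at $y = a/(a+b)$; substituting $a = p_i(x)$ and $b = p_{g_i}(x)$ yields the claimed $D_i^{*}(x) = p_i(x)/(p_i(x) + p_{g_i}(x))$, exactly as in the non--private case.

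For the privacy discriminator, collecting the $\lambda$--terms gives $\lambda \int \sum_{i=1}^N p_{g_i}(x)\log D_p^i(x)\,dx$, to be maximized subject to the simplex constraint $\sum_{i=1}^N D_p^i(x) = 1$ together with nonnegativity, for each $x$. The objective again separates across $x$, so I would reduce to the single--point problem of maximizing $\sum_i a_i \log y_i$ over the probability simplex, with $a_i = p_{g_i}(x)$. Introducing a Lagrange multiplier $\mu$ for the constraint $\sum_i y_i = 1$ gives the stationarity condition $a_i/y_i = \mu$, hence $y_i \propto a_i$ and $y_i = a_i/\sum_j a_j$; this recovers $(D_p^i)^{*}(x) = p_{g_i}(x)/\sum_j p_{g_j}(x)$, and concavity of $\log$ certifies that the stationary point is the global maximum.

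The main obstacle I expect lies less in the calculus than in justifying that pointwise maximization genuinely maximizes each integral. One must verify that the pointwise optima assemble into admissible functions (measurable, valued in $[0,1]$ for the $D_i$ and on the simplex for $D_p$) and that no coupling across $x$ is being discarded --- which is clean here precisely because the feasibility constraints are themselves imposed pointwise. I would also note the minor care needed at points where some $p_i$ or $p_{g_i}$ vanishes, where the optimal values are read off by continuity, and I would emphasize that the three maximizations are carried out independently so that the stated $D_i^{*}$ and $(D_p^i)^{*}$ are simultaneously optimal.
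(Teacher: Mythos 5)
Your proposal is correct and follows essentially the same route as the paper: both exploit the decoupling of $V_\lambda$ into the $V_0(G_i,D_i)$ terms (handled exactly as in Goodfellow et al., giving $D_i^* = p_i/(p_i+p_{g_i})$) and the $\lambda$--weighted term for $D_p$, which reduces pointwise to maximizing $\sum_i a_i\log y_i$ over the simplex with $a_i = p_{g_i}(x)$. The only cosmetic difference is that you enforce the simplex constraint with a Lagrange multiplier, whereas the paper eliminates it by substituting $y_N = 1-\sum_{j<N} y_j$ and solving the unconstrained first--order conditions; both are certified by concavity and yield $y_i^* = a_i/\sum_j a_j$.
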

\begin{proof}
Decompose the value function as 
\begin{align*}
    &V_\lambda(\{G_i\}_{i=1}^N,\{D_i\}_{i=1}^N, D_p)=\\ &\sum\limits_{i=1}^NV_0(G_i,D_i)+ \lambda \sum\limits_{i=1}^N\mathbb{E}_{z\sim p_{z}(z)}\log[D_p^i(G_i(z))],
\end{align*}
where $V_0$ is defined in Equation \eqref{eqn:V0}. Note that the first term $\sum\limits_{i=1}^NV_0(G_i,D_i)$ only depends on $\{D_i\}_{i=1}^N$, while the second term $\lambda \mathbb{E}_{z\sim p_{z}(z)}\log[D_p^i(G_i(z))]$ depends on $D_p$ alone. By Proposition 1 \cite{goodfellow2014generative}, $D_i^{*}(x) = \frac{p_i(x)}{p_i(x)+p_{g_i}(x)}$ maximizes $V_0(G_i,D_i)$ for $i=1,\cdots,N$. 

Note that (following~\cite{goodfellow2014generative}) 
\begin{align*}
&\sum\limits_{i=1}^N\mathbb{E}_{z\sim p_{z}(z)}\log D_p^i(G_i(z))=\sum\limits_{i=1}^N\mathbb{E}_{x\sim p_{g_i}}\log[D_p^i(x)]\\
&=\int_{x}\sum\limits_{i=1}^{N-1}p_{g_i}(x)\log D_p^i(x)+p_{g_N}(x)\log[1-\sum\limits_{j=1}^{N-1}D_p^j(x)]dx. 
\end{align*}
Then it is equivalent to solve the optimization problem $\max_{\{y_i\}_{i=1}^{N-1}}L(y_1,\cdots,y_{N-1})$, where $L(y_1,\cdots,y_{N-1})=\sum\limits_{i=1}^{N-1}a_i\log y_i+a_N\log[1-\sum\limits_{j=1}^{N-1}y_j]$ under the constraints that $\sum\limits_{i=1}^{N-1}y_i\in(0,1)$ and $y_i\in(0,1)$ for $i=1,\cdots,N-1$. It is reasonable to assume that $a_i\ge 0$, since the probability density function is always positive. Easy to verify that $L(y_1,\cdots,y_{N-1})$ is concave, given any positive $a_i$s. Note that $y_i^*=\frac{a_i}{\sum\limits_{j=1}^{N}a_j}$ for $i=1,\cdots,N-1$ solves the set of differential equations $\{\frac{\partial L}{\partial y_j}=0\}_{j=1,\cdots,N-1}$ for any positive $a_i$s. Thus it always maximizes $L(y_1,\cdots,y_{N-1})$ for any positive $a_i$s, and we complete the proof.
\end{proof}

Similar to the original GAN \cite{goodfellow2014generative}, define 
\[C_\lambda(\{G_i\}_{i=1}^N)=\max_{D_p}\max_{\{D_i\}_{i=1}^N} V_\lambda(\{G_i\}_{i=1}^N,\{D_i\}_{i=1}^N, D_p)
\]
\begin{theorem} \label{thm:optimal:g} The minimum achievable value of $C_\lambda(\{G_i\}_{i=1}^N)$ is $-N(\log 4 +\lambda \log N)$ for any positive $\lambda$. This value is achieved if and only if $p_{g_i}=p_i=p_{data}$, for $i=1,\cdots, N$. \\
\end{theorem}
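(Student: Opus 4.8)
The plan is to follow the template of the optimal-generator analysis in \cite{goodfellow2014generative}: substitute the closed-form optimal discriminators from Theorem~\ref{thm:discrim} into $V_\lambda$, reduce $C_\lambda(\{G_i\}_{i=1}^N)$ to a constant plus a sum of divergences, and then read off the minimum and its minimizer. First I would plug $D_i^*$ and $(D_p^i)^*$ into the decomposed value function. The utility block separates over $i$, and for each $i$ the standard computation gives $V_0(G_i,D_i^*) = -\log 4 + 2\,JSD(p_i\|p_{g_i})$, so the summed utility contributes $-N\log 4 + 2\sum_{i=1}^N JSD(p_i\|p_{g_i})$.

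Next I would handle the privacy block. Writing $\sum_{j} p_{g_j} = N\bar p_g$ with $\bar p_g = \frac1N\sum_{j} p_{g_j}$ and pulling out the $\log N$ gives
\begin{align*}
\lambda\sum_{i=1}^N \mathbb{E}_{x\sim p_{g_i}}\log\frac{p_{g_i}(x)}{\sum_{j} p_{g_j}(x)} = \lambda\sum_{i=1}^N KL(p_{g_i}\|\bar p_g) - \lambda N\log N .
\end{align*}
Recognizing the uniformly weighted generalized Jensen--Shannon divergence, $JSD(p_{g_1},\dots,p_{g_N}) = \frac1N\sum_{i} KL(p_{g_i}\|\bar p_g)$, the privacy block becomes $\lambda N\, JSD(p_{g_1},\dots,p_{g_N}) - \lambda N\log N$. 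Combining the two blocks yields the clean identity
\begin{align*}
C_\lambda(\{G_i\}_{i=1}^N) = -N(\log 4 + \lambda\log N) + 2\sum_{i=1}^N JSD(p_i\|p_{g_i}) + \lambda N\, JSD(p_{g_1},\dots,p_{g_N}).
\end{align*}
Since every Jensen--Shannon term is nonnegative and $\lambda>0$, this immediately gives the lower bound $C_\lambda \ge -N(\log 4 + \lambda\log N)$, valid for all positive $\lambda$.

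The delicate step, which I would treat most carefully, is the equality and achievability claim, because the two divergence terms encode competing pressures. Vanishing of $\sum_i JSD(p_i\|p_{g_i})$ forces $p_{g_i}=p_i$ for every $i$ (a Jensen--Shannon divergence is zero exactly when its arguments agree), whereas vanishing of $JSD(p_{g_1},\dots,p_{g_N})$ forces $p_{g_1}=\cdots=p_{g_N}$. These two requirements are simultaneously feasible only if the partition distributions themselves coincide, i.e.\ $p_1=\cdots=p_N$. This is precisely where the large-sample assumption $p_i=p_{data}$ must be invoked: since each $X_i$ is drawn from the same $p_{data}$, in the population limit the partition distributions are identical, and then both divergence terms can vanish together exactly when $p_{g_i}=p_{data}$ for all $i$. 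I would state this assumption explicitly and remark that without it the constant $-N(\log 4+\lambda\log N)$ is still a valid lower bound but need not be attainable. The only other point requiring care is confirming that the generalized $JSD$ is genuinely minimized (equal to $0$) only at coincident arguments, which follows from strict convexity of the KL term, so that the ``if and only if'' is tight rather than merely sufficient.
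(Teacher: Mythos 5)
Your proof is correct and follows essentially the same route as the paper's: substitute the optimal discriminators from Theorem~\ref{thm:discrim}, reduce $C_\lambda(\{G_i\}_{i=1}^N)$ to the identity $C_\lambda = -N(\log 4 + \lambda\log N) + 2\sum_{i=1}^N JSD(p_i||p_{g_i}) + \lambda N\, JSD(p_{g_1},\dots,p_{g_N})$, and conclude by nonnegativity of the Jensen--Shannon terms. Your treatment of achievability is in fact slightly more careful than the paper's: you make explicit that the two divergence blocks can vanish simultaneously only when $p_1=\cdots=p_N$, so the characterization $p_{g_i}=p_i=p_{data}$ implicitly rests on the partitions sharing the distribution $p_{data}$, a point the paper passes over by citing only ``the property of Jensen--Shannon divergence.''
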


\begin{proof} 
It is easy to verify that when $p_{g_i}=p_i=p_{data}$ for $i=1,\cdots, N$, $C_\lambda(\{G_i\}_{i=1}^N)$ achieves $-N(\log 4 +\lambda \log N)$. 

By its definition, $C_\lambda(\{G_i\}_{i=1}^N)$ can be re-written as: 
\begin{align*}
    \begin{split}
        &C_\lambda(\{G_i\}_{i=1}^N) = \sum_{i=1}^N\mathbb{E}_{x\sim p_{i}}[\log D_i^*(x)] +\\& \mathbb{E}_{x\sim p_{g_i}}[\log(1-D_i^*(x))]+
        \lambda\mathbb{E}_{x\sim p_{g_i}}[\log (D_p^{i*})(x)]
    \end{split}
\end{align*}

By Theorem \ref{thm:discrim} and a few algebraic manipulations, we have 
\begin{align}\label{eqn:c}
    \begin{split}
        & C_\lambda(\{G_i\}_{i=1}^N) + N(\log 4+\lambda\log N) = \\
        & \sum_{i=1}^N \left[ KL(p_i||\frac{p_i+p_{g_i}}{2}) + KL(p_{g_i}||\frac{p_i+p_{g_i}}{2}) + \right.\\
        &\left. \lambda KL(p_{g_i}||\frac{\sum_{j=1}^N p_{g_j}}{N})\right],
    \end{split}
\end{align}
 Note that the Jensen-Shannon divergence (JSD) between N distributions $p_1,...p_N$ is defined as $\frac{1}{N} \sum_{i=1}^N KL(p_i||\frac{\sum_{j=1}^N p_{j}}{N})$. Then, Equation \eqref{eqn:c} turns out to be
\begin{align*}
    \sum_{i=1}^N 2 JSD(p_i||p_{g_i})  + N \lambda JSD(p_{g_1},..,p_{g_N})\ge 0, 
\end{align*}
where the minimum is achieved if and only if $p_{g_i}=p_i$, for $i=1,\cdots, N$, and $p_{g_1}=\cdots=p_{g_N}=p_{data}$, according to the property of Jensen-Shannon divergence. Thus completing the proof.

\end{proof}

\begin{remark}
Assume that $p_i=p_{data}$ for $i=1,\cdots,N$. Given
$(\{G_i^*\}_{i=1}^N,\{D_i^*\}_{i=1}^N, D_p^*)$ - the optimal solution of Equation \eqref{eqn:optimization}, Theorems \ref{thm:discrim} and \ref{thm:optimal:g} indicates that each pair in the set \[\left\{ (G_i,D_i)_{i=1,\cdots,N}, (\sum\limits_{i=1}^NG_i/N, \sum\limits_{i=1}^ND_i/N) \right\}\] minimaximizes $V_0(G,D)$.
\end{remark}
This remark suggests that privGANs and GANs yield the same solution, when the data distribution of each partition $X_i$ is identical to that of the whole dataset $X$. This will be true, if there are infinite samples in each partition $X_i$, and $p_i$ equals to the underlying distribution, where the training samples were drawn from. 

\subsection{privGAN loss as a regularization}
\label{privGAN:reg}
In Theorem \ref{thm:optimal:g} and Remark 1, we have focused on the ideal situation where we could get access to the underlying distribution, where the training samples were drawn from. In such an ideal situation, there is no room for the membership inference attacks, thus a privGAN and a GAN yield the same solution. However, in a practical scenario, this is not true (due to unavailability of infinitely many samples), making white--box and black--box attacks against GANs effective. In the following lemma, we will demonstrate that the built--in adversary ($D_p$) serves as a regularizer that prevents the optimal generators (and hence the discriminators) from memorizing the training samples.

\begin{lemma} \label{lemma:optim}Assume that $\{(G_i^\lambda)^*\}_{i=1}^N$ minimizes $C_\lambda(\{G_i\}_{i=1}^N)$ for a fixed positive $\lambda$. Then minimizing $C_\lambda(\{G_i\}_{i=1}^N)$ is equivalent to
\begin{align}
\label{opt-reform}
\begin{split}
    &\min_{\{G_i\}_{i=1}^N}\sum_{i=1}^N  JSD(p_i||p_{g_i}) \\
    &\text{subject to:}\\
    & JSD(p_{g_1},..,p_{g_N}) \leq \delta_\lambda,
\end{split}
\end{align}
where $p_{g_i} \sim G_i(z)$ given $z \sim  p_z $ for $i=1,..,N$, and $\delta_\lambda=JSD(p_{(g_1^\lambda)^*},..,p_{(g_N^\lambda)^*})$.
\end{lemma}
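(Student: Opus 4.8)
The plan is to recognize the claimed equivalence as an instance of the standard correspondence between a penalty (Lagrangian-style) formulation and a hard-constrained formulation, and to prove it by a short contradiction argument in each direction. First I would recall the identity established inside the proof of Theorem~\ref{thm:optimal:g} (Equation~\eqref{eqn:c}), namely
\[
C_\lambda(\{G_i\}_{i=1}^N) = -N(\log 4 + \lambda\log N) + \sum_{i=1}^N 2\,JSD(p_i||p_{g_i}) + \lambda N\, JSD(p_{g_1},\dots,p_{g_N}).
\]
Since the additive constant is independent of the generators and the overall scaling is positive, minimizing $C_\lambda$ over $\{G_i\}_{i=1}^N$ is equivalent to minimizing the penalized objective
\[
F(\{G_i\}_{i=1}^N) := \sum_{i=1}^N JSD(p_i||p_{g_i}) + \mu\, JSD(p_{g_1},\dots,p_{g_N}),
\]
with a strictly positive penalty weight $\mu := \lambda N/2$ (using $\lambda>0$ and $N>1$). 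It therefore suffices to relate the minimizers of $F$ to the constrained problem~\eqref{opt-reform}.

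For the forward direction I would argue by contradiction. Let $\{(G_i^\lambda)^*\}$ minimize $F$ and set $\delta_\lambda = JSD(p_{(g_1^\lambda)^*},\dots,p_{(g_N^\lambda)^*})$, so that this point is automatically feasible for~\eqref{opt-reform}. Suppose some feasible $\{G_i'\}$ — i.e.\ with $JSD(p_{g_1'},\dots,p_{g_N'}) \le \delta_\lambda$ — achieved a strictly smaller value of $\sum_i JSD(p_i||p_{g_i'})$. Combining this strict decrease in the data-fidelity term with the bound $\mu\, JSD(p_{g_1'},\dots,p_{g_N'}) \le \mu\delta_\lambda$ coming from feasibility would give $F(\{G_i'\}) < F(\{(G_i^\lambda)^*\})$, contradicting the optimality of $\{(G_i^\lambda)^*\}$ for the penalized problem. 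Hence no such $\{G_i'\}$ exists and $\{(G_i^\lambda)^*\}$ solves~\eqref{opt-reform}.

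For the converse, any solution $\{G_i'\}$ of~\eqref{opt-reform} attains $\sum_i JSD(p_i||p_{g_i'}) = \sum_i JSD(p_i||p_{(g_i^\lambda)^*})$ while satisfying $JSD(p_{g_1'},\dots,p_{g_N'}) \le \delta_\lambda$; substituting into $F$ yields $F(\{G_i'\}) \le F(\{(G_i^\lambda)^*\})$, so by minimality of $F$ equality must hold, which forces the inequality constraint to be active and shows $\{G_i'\}$ also minimizes $F$, hence $C_\lambda$. The one point requiring care — and the main obstacle — is precisely this activeness of the inequality constraint: I must verify that strict slack $JSD(p_{g_1'},\dots,p_{g_N'}) < \delta_\lambda$ together with the strictly positive weight $\mu$ would push $F$ strictly below its minimum, which is exactly what rules that case out and pins the constraint at $\delta_\lambda$. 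Everything else — dropping the additive constant, absorbing the factor of $2$, and positivity of $\mu$ — is routine.
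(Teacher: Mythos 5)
Your proposal is correct and takes essentially the same route as the paper's own proof: both use the identity from Equation~\eqref{eqn:c} to reduce $\min C_\lambda$ to the penalized objective $\sum_{i=1}^N JSD(p_i||p_{g_i}) + \frac{N\lambda}{2}\,JSD(p_{g_1},\dots,p_{g_N})$, then prove one direction via the feasibility-plus-minimality chain of inequalities and the other by the same contradiction argument. Your added remark that the constraint must be active at any solution of \eqref{opt-reform} is a harmless refinement the paper leaves implicit.
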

\begin{proof}
Since $\lambda$ and $N$ are fixed, 
reformulate $\min C_\lambda(\{G_i\}_{i=1}^N)$ as 
\begin{align}
\label{optim-form1}
    \min_{\{G_i\}_{i=1}^N}\sum_{i=1}^N  JSD(p_i||p_{g_i})  + \frac{N \lambda}{2} JSD(p_{g_1},..,p_{g_N}).
\end{align}

Assume that $\{(G_i^\lambda)^+\}_{i=1}^N$ solves Equation \eqref{opt-reform}. It also minimizes $C_\lambda(\{G_i\}_{i=1}^N)$, since $\sum\limits_{i=1}^N JSD(p_i||p_{(g_i^\lambda)^+})+\frac{N\lambda}{2} JSD(p_{(g_1^\lambda)^+},..,p_{(g_N^\lambda)^+})\le \sum\limits_{i=1}^N JSD(p_i||p_{(g_i^\lambda)^*})+\frac{N\lambda}{2}\delta_\lambda= C_\lambda(\{(G_i^\lambda)^*\}_{i=1}^N)=\min C_\lambda(\{G_i\}_{i=1}^N)$.

We will then show that $\{(G_i^\lambda)^*\}_{i=1}^N$ is a solution of Equation \eqref{opt-reform}. If the above assumption is not true, then there exists $\{G_i\}_{i=1}^N$ such that $\sum\limits_{i=1}^N JSD(p_i||p_{g_i})< \sum\limits_{i=1}^N JSD(p_i||p_{(g_i^\lambda)^*})$, and $JSD(p_{g_1},..,p_{g_N}) \leq \delta_\lambda$. Then $\sum\limits_{i=1}^N JSD(p_i||p_{g_i})+\frac{N\lambda}{2} JSD(p_{g_1},..,p_{g_N})< \sum\limits_{i=1}^N JSD(p_i||p_{(g_i^\lambda)^*})+\frac{N\lambda}{2}\delta_\lambda= C_\lambda(\{(G_i^\lambda)^*\}_{i=1}^N)=\min C_\lambda(\{G_i\}_{i=1}^N)$. This contradicts the assumption that $\{(G_i^\lambda)^*\}_{i=1}^N$ minimizes $C_\lambda(\{G_i\}_{i=1}^N)$. This completes the proof. 
\end{proof}
Theorem \ref{thm:optimal:g} and Lemma \ref{lemma:optim} provide an intuitive understanding of the properties of the optimal generator distributions. More specifically, it has been shown that the cost function $C_\lambda(\{G_i\}_{i=1}^N)$ reduces to a trade off between the distance of generator distributions and their corresponding data split, and their distance to the other generator distributions. On the one hand, the privacy discriminator can be seen as a regularization to prevent generators from memorizing their corresponding data split.  On the other hand, a privGAN will yield the same solution as a non--private GAN when there is low risk of memorization, as suggested by Remark 1. This interesting observation further demonstrates the effectiveness of our protection, as the success of all kinds of membership inference attacks heavily rely on the extent of training set memorization. In addition, the reformulation of the optimization problem for the generators (seen in Lemma 1) provides a more explicit way to bound the distance between the generator distributions, which can be explored in future work to provide privacy guarantees. It should also be noted that as $\lambda$ increases, the upper bound of $JSD(p_{g_1},..,p_{g_N})$ for $\{G_i^*\}_{i=1}^N$ decreases and as $\lambda \rightarrow \infty$, $JSD(p_{g_1},..,p_{g_N}) \rightarrow 0$. Hence, while $\lambda$ and $N$ are dataset dependent quantities, by increasing $\lambda$ it should be possible to reduce $JSD(p_{g_1},..,p_{g_N})$ (for the optimal solution) to a user-desired value for most datasets.

{\it Note:} While the previous mathematical formulation and theoretical results are based on the original GAN formulation~\cite{goodfellow2014generative}, the same idea of multiple generator-discriminator pairs and an internal adversary could be extended to most other GAN approaches such as WGAN~\cite{arjovsky2017wasserstein},Conditional GAN~\cite{mirza2014conditional} etc.. It must be noted though that the mathematical results will not automatically extend without modification to such cases. 

\subsection{Practical implementation of privGAN architecture}
For the purposes of practical implementation of privGAN we just duplicated the discriminator and generator architectures of the simple GAN  for all the component generator--discriminator pairs of privGAN.  The privacy discriminator is identical in architecture to other discriminators barring the activation of the final layer, which is soft--max instead of sigmoid (in other discriminators). Identical learning rates are used as in the simple GAN. Hence, the only additional hyperparameters in privGAN are $\lambda$ and the number of generator--discriminator pairs $N$. It should be noted that in the case of practical implementation, $N$ lies in a bounded range $[2,\frac{|X_{train}|}{batchSize}]$ for a fixed batch size.
\begin{figure*}[h!]
    \centering
    \includegraphics[scale=0.75]{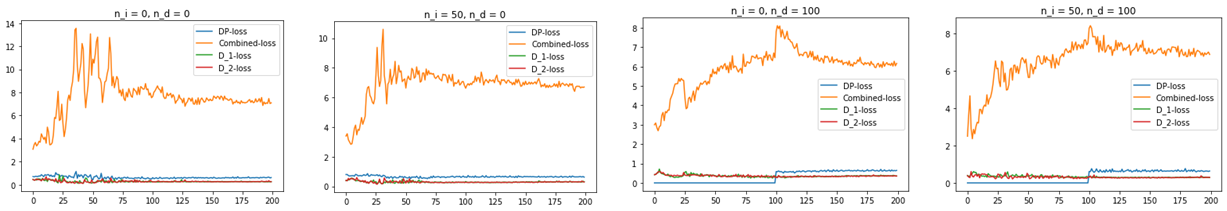}
    \caption{Comparing the loss convergence for different model training hyperparameters of privGAN on the MNIST dataset with $\lambda=1$. }
    \label{mnist-train}
\end{figure*}

\subsection{Training privGAN}
The overall training algorithm for privGAN can be seen in Algorithm~\ref{alg:train}. While an alternating minimization strategy seems like a reasonable choice for training the privGAN, there are several practical tricks that can accelerate the convergence. The first trick is to set up good initial weights for the privacy discriminator ($D_p$). We first train a neural network to distinguish the different partitions of the training data (corresponding to each generator) for a small number of epochs $n_i$ (here we used $50$). Then initialize $D_p$ with this neural network. The second trick is to fix $D_p$ for the first $n_d$ epochs (here we used $100$) after the initialization, while only allowing the generator-discriminator pairs to train. These two strategies have been shown to accelerate the convergence. In addition, the learning curves for different combinations of $n_i$ and $n_d$ are presented in Figure~\ref{mnist-train}. Setting $n_d=0$ leads to big initial transients in the combined loss which eventually subsides. A possible explanation is that it is too hard for the generators to beat both discriminators and the privacy discriminator at the very beginning. The effect of setting $n_i=0$ is less dramatic when $n_d=100$. However, the convergence of the combined loss could be accelerated by setting $n_i=50$, when $n_d = 0$. Note that $D_p$ will be initialized with random weights by setting $n_i=0$. Last but not least, the relative values of the various losses after $200$ epochs are quite stable to the choice of $n_i$ and $n_d$, as shown in Figure~\ref{mnist-train}. Note - $n_i=50$ and $n_d=100$ are used in all experiments performed in the following sections. 

\begin{algorithm}[!htb]
   \caption{Training privGAN}
   \label{alg:train}
\begin{algorithmic}
   \STATE {\bfseries Input:} Dataset $X$ with $m$ samples, array of generators $G$, array of discriminators $D$, privacy discriminator $D_p$, privacy weight $\lambda$, number of epochs $n$, initial $D_p$ epochs $n_i$, co-training delay $n_d$
   \STATE $N \gets$ number of discriminators/generators
   \STATE Divide $X$ into $N$ equal parts $\{X_1,..X_N\}$
   \STATE $y^p \gets$ partition index of each data point
   \FOR {$i = 1$ {\bfseries to} $n_i$}
   \STATE Train one epoch of $D_p$ with $(X,y^p)$
   \ENDFOR
   \FOR {$i = 1$ {\bfseries to} $n$}
   \FOR {$j = 1$ {\bfseries to} $N$}
   \STATE $X_j^f \gets$ fake images generated with $G_j$
   \STATE $y^f_j \gets$ random numbers in $[1,N]$ excluding $j$
   \STATE $y_j \gets$ labels for fake and real images
   \STATE $\hat{X}_j$ = $\{X_j,X_j^f\}$
   \STATE Train one epoch of $D_j$ with $(\hat{X}_j,y_j)$
   \ENDFOR
   \STATE $X^f \gets \{X^f_1,..X^f_N\}$
   \STATE $y^{fp} \gets \{y^f_1,..y^f_N\}$
   \IF {$i \geq n_d$}
   \STATE Train one epoch of $D_p$ with $(X^f,y^p)$
   \ENDIF
   \STATE Train one epoch of $G$ with $(X^f,y^{fp},y^p,\lambda)$
   \ENDFOR
\end{algorithmic}
\end{algorithm}

\section{Proposed Attacks}
\label{PropAttacks}
In this section, we will introduce several state-of-the-art membership inference attacks against generative models from~\cite{hilprecht2019monte,hayes2019logan},  as well as an oracle attack of our own. Although the privGAN is specifically designed to defend against membership inference attacks targeting generators (or generated data), we also present other adversaries that attack released discriminator models for completeness. 
\subsection{White--box attack on discriminator}
The white--box attack on a simple GAN is performed as outlined in~\cite{hayes2019logan}. Briefly, the attack assumes that the adversary is in possession of the trained model along with a large data pool including the training samples. The attacker is also assumed to have the knowledge of what fraction of the dataset was used for training (say $f$) but no other information about the training set. The attack then proceeds by using the discriminator of the trained GAN to obtain a probability score for each sample in the dataset (see Algorithm~\ref{alg:wb}). The samples are then sorted in descending order of probability score and the top $f$ fraction of the scores are outputted as the likely training set. The evaluation of the white--box attack is done by calculating what fraction of the predicted training set was actually in the training set.
\begin{algorithm}
   \caption{White-box attack on GAN}
   \label{alg:wb}
\begin{algorithmic}
   \STATE {\bfseries Input:} Dataset $X$ with $m$ samples, discriminator $D$
   \FOR {$i = 1$ {\bfseries to} $m$}
   \STATE $p(X_i) \leftarrow D(X_i)$
   \ENDFOR
   \STATE return $p$
\end{algorithmic}
\end{algorithm}

Since a privGAN model has multiple generator--discriminator pairs, the previously described attack cannot be directly applied to it. However, for a successful white--box attack, each of the discriminators should score samples from the training corpus higher than those not used in training (note: the training sets are of the same size for both private and non--private GANs). Hence, we modify the previous approach by identifying a \textit{max} probability score by taking the max over the scores from all discriminators (see Algorithm~\ref{alg:wb-priv}). The rationale being that the discriminator which has trained on a particular data sample should have the largest discriminator score. We now proceed to sort the samples by each of these aggregate scores and select the top $f$ fraction samples as the predicted training set, which is similar to  Algorithm~\ref{alg:wb-priv}. We also tried taking \textit{mean} instead of \textit{max} which led to largely similar results, hence we only report the results for the \textit{max} attack here.  
\begin{algorithm}
   \caption{White-box attack on privGAN}
   \label{alg:wb-priv}
\begin{algorithmic}
   \STATE {\bfseries Input:} Dataset $X$ with $m$ samples, array of discriminators $D$
   \STATE $N \gets$ number of discriminators
   \FOR {$i = 1$ {\bfseries to} $m$}
   \STATE $\{p^1(X_i),..,p^N(X_i)\} \gets \{D_1(X_i),..,D_N(X_i)\}$
   \ENDFOR
   \STATE $p^{max} \gets max(p^1,..,p^N)$
    \STATE return $p^{max}$
\end{algorithmic}   
\end{algorithm}
\subsection{Oracle white--box attack on discriminator}
While we describe a particular white--box attack targeting discriminator models from~\cite{hayes2019logan} in the previous sub--section, this is merely a heuristic based attack and there can be many other such heuristic attacks. A detailed analysis of all possible attacks is beyond the scope of this paper, but a taxonomy of possible attacks can be found in~\cite{chen2019gan}. While the previously described white--box attack is an intuitive heuristic for a practical scenario, here we seek to identify the upper limit of membership inference accuracy of white--box attacks based solely on discriminator scores. This will enable us to better quantify the privacy loss due to GANs.

We first define the following notations: 
\begin{itemize}
\item $X_{train}$ is the training set for the GAN, and $X_{ho}$ is the holdout set
\item  $D(\cdot)$ is the discriminator
    \item $x \in X_{train}\cup X_{ho}$, is a sample which may or may not have been used to train the GAN.
    \item $X$ is a random number drawn uniformly from $X_{train}\cup X_{ho}$
    \item $\mathcal{D} = \{D(x)|x\in X_{train}\cup X_{ho}\}$, and $M$ is the size of $\mathcal{D}$. Sort the set $\mathcal{D}$ in the ascending order, and define $d_i$ as its $i$th element for $i=1,\cdots, M$. 
     \item $\mathcal{A}(D(x))\in\{0,1\}$ is an adversary to infer the membership of a sample $x$ using $D(x)$. More specifically, $\mathcal{A}(D(x))=1$ means that the adversary classifies the sample $x$ as a training sample, otherwise it is classified into the holdout set $X_{ho}$ ($\mathcal{A}(D(x))=0$).   
     \item The utility score $\Delta$ of a membership inference attack $\mathcal{A}$ at a sample $x$ is: 
\begin{align*}
    \Delta_\mathcal{A}(x)= 
    \begin{cases}
    1 ,& \text{if membership correctly identified}\\
    -1, & \text{otherwise}
    \end{cases}
\end{align*}
    \item $p_i = Pr(D(X)= d_i |X \in X_{train})$
    \item $q_i = Pr(D(X) = d_i |X \in X_{ho})$
        \item $f =Pr(X\in X_{train})= \frac{|X_{train}|}{|X_{train}| + |X_{ho}|}$
    \item $s_i = Pr(D(X) = d_i) = p_i f + q_i (1-f)$
    \item $\lambda_i = Pr(X \in X_{train} | D(X) = d_i) = \frac{p_i f}{p_i f + q_i (1-f) }$
    \item $\delta^\mathcal{A}_i = \mathcal{A}(D(d_i)) $
\end{itemize}
Note that $\{p_i,q_i\}_{i=1,\cdots,M}$ could exactly describe the probability distribution of $D(X)$ given $X\in X_{train}$ or $X\in X_{ho}$. Thus, we are making no assumption about the distribution of $D(X)$. Next, we identify the maximum expected value of $\Delta$ for any adversary with access to $p_i$, $q_i$ and $f$.

\begin{theorem}
The maximum expected value of $\Delta$ is given by:
\begin{align*}
    \max_\mathcal{A}\mathbb{E}_X[\Delta_\mathcal{A}(X)]= \sum_{i=1}^M |p_i f - q_i (1-f)|
\end{align*}
\end{theorem}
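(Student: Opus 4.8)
The plan is to reduce the maximization over all adversaries $\mathcal{A}$ to a collection of independent scalar decisions, one per distinct discriminator value $d_i$, and then maximize each decision in isolation. The crucial structural observation is that an adversary is a function of $D(x)$ alone, so it must assign the \emph{same} label $\delta_i=\mathcal{A}(d_i)\in\{0,1\}$ to every sample whose discriminator score equals $d_i$. Since $D(X)$ takes only the $M$ values $d_1,\dots,d_M$, the adversary is completely specified by the vector $(\delta_1,\dots,\delta_M)$, and I expect the expected utility to split additively over these $M$ values.

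First I would expand $\mathbb{E}_X[\Delta_\mathcal{A}(X)]$ by conditioning on the event $\{D(X)=d_i\}$ and on whether $X\in X_{train}$ or $X\in X_{ho}$. Using $Pr(D(X)=d_i,\, X\in X_{train})=p_i f$ and $Pr(D(X)=d_i,\, X\in X_{ho})=q_i(1-f)$, which follow directly from the definitions of $p_i$, $q_i$, and $f$, and tracking the four cases (true/false membership against each label) to see when $\Delta$ equals $+1$ versus $-1$, the contribution of the value $d_i$ becomes
\[
p_i f\,(2\delta_i-1) + q_i(1-f)\,(1-2\delta_i) = (2\delta_i-1)\bigl(p_i f - q_i(1-f)\bigr).
\]
Summing over $i$ yields $\mathbb{E}_X[\Delta_\mathcal{A}(X)]=\sum_{i=1}^M (2\delta_i-1)\bigl(p_i f - q_i(1-f)\bigr)$.

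Because the $\delta_i$ are unconstrained binary choices appearing in separate summands, the maximization decouples entirely: each factor $2\delta_i-1$ ranges over $\{-1,+1\}$, so the $i$th term is maximized by matching the sign of $p_i f - q_i(1-f)$, i.e.\ by taking $\delta_i=1$ exactly when $p_i f > q_i(1-f)$ (equivalently, when the posterior $\lambda_i$ exceeds $\tfrac12$). Each maximized term equals $|p_i f - q_i(1-f)|$, and summing gives the claimed identity. I would close by noting that this optimal rule is precisely the Bayes/MAP decision, which also explains why the heuristic attacks discussed earlier can be at best this accurate.

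The step I expect to require the most care is the bookkeeping in the first expansion: correctly pairing each of the four cases with the right sign of $\Delta$ and the right joint probability, and justifying that a single deterministic label per score value loses no generality. On the latter point I would observe that the objective is linear in each $\delta_i$, so relaxing $\delta_i\in\{0,1\}$ to $\delta_i\in[0,1]$ (randomized adversaries) cannot increase the maximum; the optimum is always attained at an endpoint, so restricting to deterministic attacks is without loss.
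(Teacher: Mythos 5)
Your proposal is correct and follows essentially the same route as the paper's proof: decompose the expected utility over the $M$ distinct discriminator values, observe that the binary decisions $\delta_i$ decouple into independent linear terms, and apply the Bayes threshold rule ($\delta_i = 1$ iff $\lambda_i \geq \tfrac12$, equivalently $p_i f \geq q_i(1-f)$). The only differences are cosmetic — you work with joint probabilities $p_i f$ and $q_i(1-f)$ rather than the paper's conditional quantities $\lambda_i, s_i$ and its explicit integer-programming framing, which makes your algebra slightly cleaner and adds the (correct) observation that linearity rules out any gain from randomized adversaries.
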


\begin{proof}
The expected value of $\Delta$ is given by: 
\begin{align}
\begin{split}
    \mathbb{E}_X[\Delta_\mathcal{A}(X)]&= 1 \times Pr(\text{correct prediction by $\mathcal{A}$}) - \\  &1 \times Pr(\text{wrong prediction by $\mathcal{A}$})
\end{split} \\
    \begin{split}
            &= \sum_{i=1}^M [\lambda_i \delta^\mathcal{A}_i + (1-\lambda_i)(1-\delta^\mathcal{A}_i) - (1-\lambda_i)\delta^\mathcal{A}_i - \\
            &\lambda_i (1-\delta^\mathcal{A}_i)] s_i
    \end{split}
\end{align}
The optimal attack strategy is given by the following Integer Programming (IP) formulation: 
\begin{equation}
\begin{aligned}
       \mathcal{A}^{*} &= \arg \max_{\mathcal{A}} \sum_{i=1}^M [\lambda_i \delta^\mathcal{A}_i + (1-\lambda_i)(1-\delta^\mathcal{A}_i) 
        \\ &- (1-\lambda_i)\delta^\mathcal{A}_i - \lambda_i (1-\delta^\mathcal{A}_i)] s_i \\
    & \text{subject to:} \\
    & \delta^\mathcal{A}_i \in \{0,1\}, \forall i \in \{1,..M\}
\end{aligned}    
\end{equation}

This can be simplified to the following IP formulation: 
\begin{equation}
\begin{aligned}
\{(\delta^\mathcal{A}_i)^{\star}\}_{i=1}^M &= \arg \max_{\{\delta^\mathcal{A}_i\}_{i=1}^M} \sum_{i=1}^M (2 \lambda_i - 1) \delta^\mathcal{A}_i s_i \\
& \text{subject to:} \\
& \delta^\mathcal{A}_i \in \{0,1\}, \forall i \in \{1,..M\}
\end{aligned}
\end{equation}

It is easy to see that this IP has an analytical solution given by: 
\begin{align}
    {\delta^\mathcal{A}}_i^{\star} = 
    \begin{cases}
    1 ,& \text{if } \lambda_i \geq \frac{1}{2}\\
    0,              & \lambda_i < \frac{1}{2}
    \end{cases}, \forall i \in \{1,..,M\}
\end{align}
We note that the condition $\lambda_i \geq \frac{1}{2}$ is equivalent to $p_i\geq \frac{1-f}{f} q_i$. 

Using $\{(\delta^\mathcal{A}_i)^{\star}\}_{i=1}^M$, we find the maximum value of $\mathbb{E}_X[\Delta_\mathcal{A}(X)]$ to be: 
\begin{equation}
\begin{aligned}
    \max\mathbb{E}_X[\Delta_\mathcal{A}(X)]&= \sum_{i=1}^M [\lambda_i \mathbf{1}_{p_i\geq \frac{1-f}{f} q_i} + (1 - \lambda_i) \mathbf{1}_{q_i <  \frac{1-f}{f} p_i} 
    \\&-  (1 - \lambda_i) \mathbf{1}_{p_i\geq \frac{1-f}{f} q_i} - \lambda_i \mathbf{1}_{q_i <  \frac{1-f}{f} p_i}] s_i \\
    & =  \sum_{i=1}^M [(2 \lambda_i - 1)\mathbf{1}_{p_i\geq \frac{1-f}{f} q_i} + \\
    &(1 - 2 \lambda_i ) \mathbf{1}_{q_i <  \frac{1-f}{f} p_i}]s_i \\
    &=   \sum_{i=1}^M |p_i f - q_i (1-f)| \label{proof_final_line}
\end{aligned}    
\end{equation}
\end{proof}
It should be noted that such an oracle attack has a $\mathbb{E}_X[\Delta_\mathcal{A}(X)]_{max} = |2 f - 1| \geq 0$ even when $P = Q$.
We further show in Lemma~\ref{LemmaTVD} that for $f=\frac{1}{2}$, $\mathbb{E}_X[\Delta_\mathcal{A}(X)]_{max} = TVD(P,Q)$, where TVD stands for the Total Variation Distance between the distributions $P$ and $Q$~\cite{dudley2010distances}. 

\begin{lemma}
\label{LemmaTVD}
The maximum expected value of $\Delta$ is for $f=\frac{1}{2}$ is equal to the Total Variation Distance between probability distributions $P$ and $Q$, where $P$ is the distribution of discriminator scores given $x \in X_{test}$ while where $Q$ is the distribution of discriminator scores given $x \in X_{ho}$.
\end{lemma}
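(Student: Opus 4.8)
The plan is to derive the lemma directly as a specialization of the immediately preceding theorem, which already supplies a closed form for the optimal attack utility at an arbitrary prior $f$. First I would recall that theorem's conclusion, $\max_\mathcal{A}\mathbb{E}_X[\Delta_\mathcal{A}(X)]= \sum_{i=1}^M |p_i f - q_i (1-f)|$, and then simply set $f = \frac{1}{2}$. Since this forces $1 - f = \frac{1}{2}$ as well, each summand becomes $|p_i \cdot \tfrac{1}{2} - q_i \cdot \tfrac{1}{2}| = \tfrac{1}{2}|p_i - q_i|$, so the maximum expected utility collapses to $\frac{1}{2}\sum_{i=1}^M |p_i - q_i|$.

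The second and final step is to recognize this quantity as the total variation distance. Using the conditional masses $p_i = Pr(D(X) = d_i \mid X \in X_{train})$ and $q_i = Pr(D(X) = d_i \mid X \in X_{ho})$ from the notation list, the score distributions $P$ and $Q$ are the discrete laws of $D(X)$ on the common sorted support $\{d_1,\dots,d_M\}$ with masses $\{p_i\}$ and $\{q_i\}$. The standard definition of total variation distance for two discrete distributions on a shared finite support is $TVD(P,Q) = \frac{1}{2}\sum_{i=1}^M |p_i - q_i|$, which matches exactly the expression obtained in the first step, completing the argument.

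There is essentially no analytic obstacle here: the entire content of the result is carried by the preceding theorem, and Lemma~\ref{LemmaTVD} is a one-line corollary. The only points I would state carefully are (i) that at $f=\frac{1}{2}$ the common factor $\frac{1}{2}$ may be pulled out of the absolute value precisely because it is nonnegative, and (ii) that $\{p_i\}$ and $\{q_i\}$ genuinely define probability distributions on the shared support $\{d_i\}$, so that the half-$\ell_1$ formula is the correct instantiation of $TVD$. I would also briefly reconcile the notation, noting that the set the lemma labels $X_{test}$ is the training set $X_{train}$ appearing in the theorem's notation, so that $P$ is indeed the conditional law with masses $\{p_i\}$ and $Q$ the one with masses $\{q_i\}$.
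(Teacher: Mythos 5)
Your proposal is correct and matches the paper's own proof in substance: both simply specialize the preceding theorem's formula $\sum_{i=1}^M |p_i f - q_i(1-f)|$ at $f=\tfrac{1}{2}$ to obtain $\tfrac{1}{2}\sum_{i=1}^M |p_i - q_i|$ and identify this as the total variation distance between the discrete score distributions $P$ and $Q$. Your extra care about pulling the factor $\tfrac{1}{2}$ out of the absolute value and reconciling the $X_{test}$/$X_{train}$ notation is sound but does not change the argument, which is the same one-line corollary the paper gives.
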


\begin{proof}
In the case of $f = \frac{1}{2}$, the condition $\lambda_i \geq \frac{1}{2}$ is equivalent to $p_i \geq q_i$. Hence, equation~\ref{proof_final_line} can be re--written as: 
\begin{align}
    \max\mathbb{E}_X[\Delta_\mathcal{A}(X)] &= \frac{1}{2} \sum_{i=1}^M |p_i - q_i|
\end{align}
This is equal to the Total Variation Distance between probability distributions $P$ and $Q$ (Q.E.D.). 
\end{proof}

Based on the previously stated observations, we use Total Variation Distance between the distribution of discriminator scores for $X_{test}$ and $X_{ho}$ to robustly quantify effectiveness of an attacker that only uses discriminator scores as described in Algorithm~\ref{alg:tvd}. Note - In the practical scenario, since the number of samples used in training are limited, we bin the discriminator scores into equally spaced bins and calculate the TVD on the resulting distributions. 

\begin{algorithm}
   \caption{TVD attack on GAN}
   \label{alg:tvd}
\begin{algorithmic}
   \STATE {\bfseries Input:} Datasets $X_{train}$ and $X_{ho}$ , discriminator $D$, number of bins $M$
   \STATE $p^i \leftarrow D(X_i), \forall i \in \{1,|X_{train}|\}$
   \STATE $q^i \leftarrow D(X_i), \forall i \in \{1,|X_{ho}|\}$
   \STATE $P \leftarrow$ probability distribution of $p^i$ with the range $[0,1]$ discretized into $M$ equal bins
   \STATE $Q \leftarrow$ probability distribution of $q^i$ with the range $[0,1]$ discretized into $M$ equal bins   
   \STATE return $TVD(P,Q)$
\end{algorithmic}
\end{algorithm}

Similar to the white--box attack described in the previous sub--section, this attack doesn't work directly on privGAN due to the presence of multiple discriminators. Hence, we modify the attack by taking the maximum TVD of the distribution of discriminator scores for $X_{test}$ and $X_{ho}$ among all discriminators (Algorithm~\ref{alg:tvd-priv}). 

\begin{algorithm}
   \caption{TVD attack on privGAN}
   \label{alg:tvd-priv}
\begin{algorithmic}
   \STATE {\bfseries Input:} Datasets $X_{train}$ and $X_{ho}$ , array of discriminators $D$, number of bins $M$
   \FOR {$j = 1$ {\bfseries to} $|D|$}
   \STATE $p^{ij} \leftarrow D_j(X_i), \forall i \in \{1,|X_{train}|\}$
   \STATE $P^j \leftarrow$ probability distribution of $p_{ij}$ with the range $[0,1]$ discretized into $M$ equal bins
   \STATE $q^{ij} \leftarrow D_j(X_i), \forall i \in \{1,|X_{ho}|\}$
   \STATE $Q^j \leftarrow$ probability distribution of $q_{ij}$ with the range $[0,1]$ discretized into $M$ equal bins
   \ENDFOR
   \STATE return $\max_j(TVD(P^j,Q^j))$
\end{algorithmic}
\end{algorithm}

\subsection{Monte--Carlo attack on generator}
While the previous two sub--sections describe attacks on GANs that rely on using the discriminator scores to infer training set membership, it has been shown in~\cite{hilprecht2019monte} that generators of GANs are also vulnerable to membership inference attacks. ~\cite{hilprecht2019monte} describes two attacks against generators, namely, instance membership inference and set membership inference. Instance membership inference is shown to work only marginally better than random chance guessing, while set membership inference was shown to be very effective in multiple datasets. Hence, in this paper we restrict ourselves to the set membership inference attack on generators only as described in~\cite{hilprecht2019monte}. 

In a set membership attack, we are given two sets of samples. All samples in one set have the desired membership, while all samples in the other set do not have said membership. The goal of the attacker is to identify which of the two sets contain samples with the desired membership. In the context of attacks against generative models, the goal is to identify the set that was part of the training set for the model. 

\begin{algorithm}
   \caption{Monte--Carlo set membership attack on GAN/privGAN}
   \label{alg:mc}
\begin{algorithmic}
   \STATE {\bfseries Input:} Datasets $X_{train}$ and $X_{ho}$ , set size $m$, synthetic dataset $X$ of size $n$, distance metric $d$, epsilon $\epsilon$ 
   \STATE $S_1 \leftarrow$, random subset of $X_{train}$ of size $m$
   \STATE $S_0 \leftarrow$, random subset of $X_{ho}$ of size $m$
   \STATE $y \leftarrow []$
   \FOR {$j = 1$ {\bfseries to} $m$}
   \STATE $f_{S_1,\epsilon} = \frac{1}{n}\sum_{i=1}^n\mathbf{1}_{x_i \in U_{S_1,\epsilon}(x_i)}$
   \STATE $f_{S_0,\epsilon} = \frac{1}{n}\sum_{i=1}^n\mathbf{1}_{x_i \in U_{S_0,\epsilon}(x_i)}$  
   \IF {$f_{S_1,\epsilon} \geq f_{S_0,\epsilon}$}
   \STATE $y[i] = 1$
   \ELSE 
   \STATE $y[i] = 0$
   \ENDIF
   \ENDFOR
   \IF {$Sum(y)>\frac{m}{2}$}
   \STATE return 1 
   \ELSIF {$Sum(y)<\frac{m}{2}$}
   \STATE return 0 
   \ELSE
   \STATE return $Bernoulli(0.5)$
   \ENDIF
\end{algorithmic}
\end{algorithm}

Algorithm~\ref{alg:mc} lays out the set membership inference algorithm described in~\cite{hilprecht2019monte}. The attack works by using the generator to generate some $n$ number of samples (or using $n$ generator generated samples if only samples are available). For each sample in the two sets whose memberships are being tested, we calculate $\frac{1}{n}\sum_{i=1}^n\mathbf{1}_{x_i \in U_{S_i,\epsilon}(x_i)}$, where $U_{\epsilon}(x) = \{x'|d(x,x')\leq \epsilon\}$. The authors find that the most effective distance is the a PCA based one. Where the top 40 principal components of the vectorized images in a held out set is first computed. To calculate the distance $d$ between two images $x$ and $x'$, the PCA transformation is used to first compute the 40 principal components of interest. An euclidean distance is then computed between these reduced dimension vectors. The authors also prescribe several heuristics for choosing $\epsilon$. The most effective heuristic is shown to be the median heuristic: 
\begin{align*}
\epsilon = \underset{1\leq i \leq 2 m}{\mathrm{median}}(\min_{1 \leq j \leq n} d(x_i,g_j))
\end{align*}
Here $g_j$ refers to the $j$th generated sample. We note that while~\cite{hilprecht2019monte} describes several minor variants of the same set membership attack (namely, different choices of distance metrics and heuristics for selecting $\epsilon$), we choose the variant that was reported to have the best results in their experiments. We note that since all three of the datasets used in~\cite{hilprecht2019monte} are also used in our paper, this selection is well motivated. 

\section{Experiment Details}
\subsection{Datasets used}
We use the following standard open datasets for our experiments: i) MNIST, ii) fashion--MNIST, iii) CIFAR-10, and iv) Labeled Faces in Wild (LFW). MNIST and fashion--MNIST are grayscale datasets of size $70,000$ ($60,000$ training samples, $10,000$ test samples). MNIST comprises of images of handwritten digits, while fashion--MNIST contains images of simple clothing items. They contain a balanced number of samples from $10$ classes of images. CIFAR-10 is a colored (RGB) dataset of everyday objects of size $60,000$ ($50,000$ training samples, $10,000$ test samples). LFW is a dataset of size $13,223$ comprising of faces of individuals. We use the grayscale version of the dataset made available through scikit--learn.

\begin{figure}[!htb]
    \centering
    \includegraphics[scale=0.32]{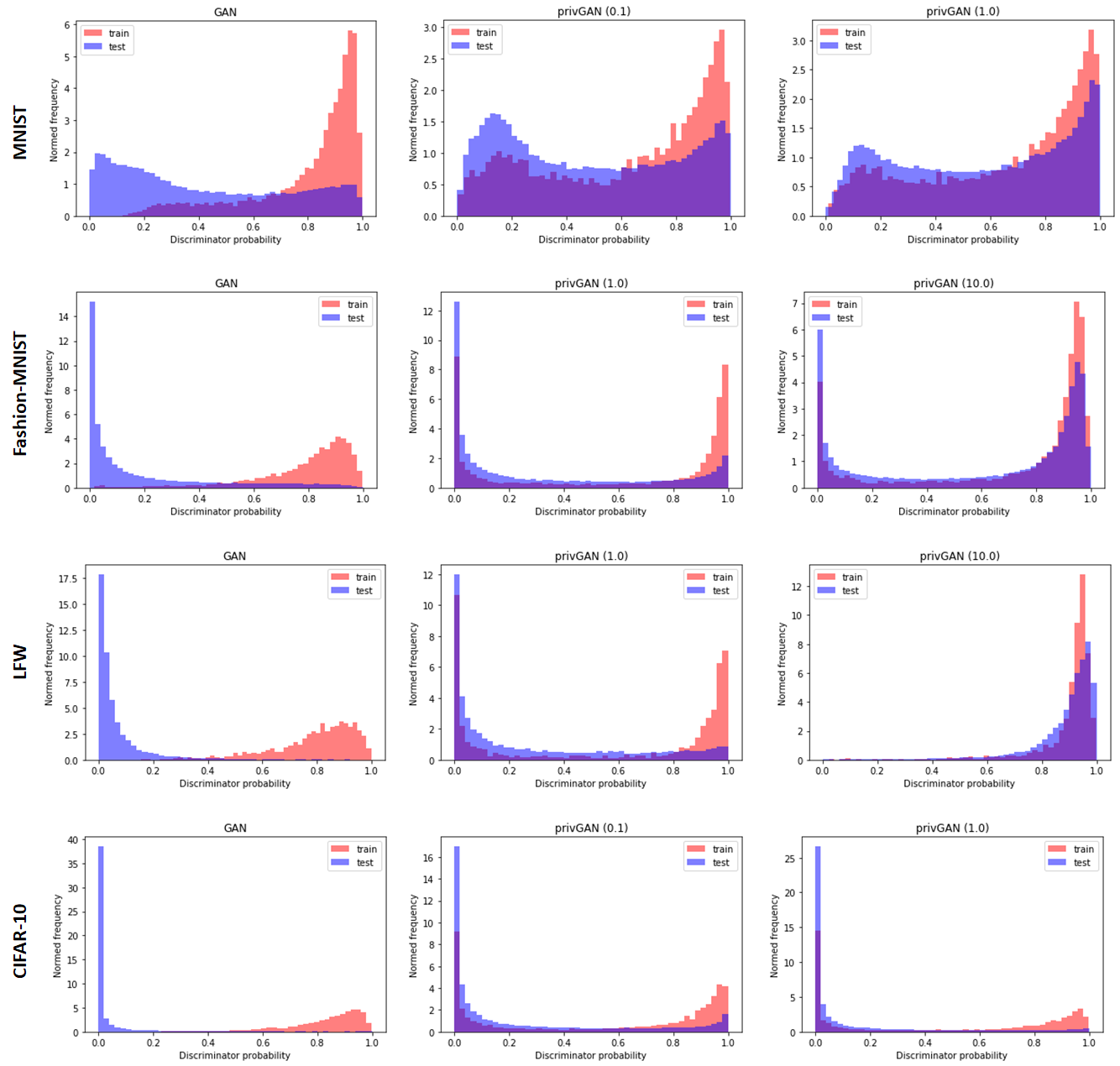}
    \caption{Comparison of predicted scores by discriminators during white box attack on privGAN and non-private GANs for the various datasets. In the case of privGAN, the scores from one randomly selected discriminator. 'Train' refers to scores of data points in the training set, while 'test' refers to scores of data points not in the training set. }
    \label{pv-loss-1}
\end{figure}

\subsection{Modeling and optimization}
For MNIST, MNIST-fashion and LFW, we use standard fully connected networks for both generators and discriminators since these are relatively simple datasets. The generator and discriminator architecture details can be found in the Supplementary Methods. Identical generator and discriminator architectures are used for both GANs and privGANs. In the comparisons with DPGANs we use an identical architecture as detailed in Supplementary Methods. While evaluating the different adversarial attacks, we trained all GAN models with an Adam~\cite{kingma2014adam} optimizer with a learning rate of $0.0002$ ($\beta = 0.5$) for $500$ epochs. While evaluating performance on downstream classification tasks, we train all GAN models with an Adam optimizer with a learning rate of $0.0002$ ($\beta = 0.5$) for $200$ epochs (except in CIFAR--10 where we train for $400$ epochs as it is a much more complicated dataset). For the classifier, we use simple CNN models (see architecture in Supplementary Methods). For the CNN models, we still used a learning rate of $0.0002$ but trained for $50$ epochs instead since the model converges quickly. In all cases we used a batch--size of 256.  

To test the efficiency of white--box  and TVD attacks, models were trained on $10\%$ of the data as in~\cite{hayes2019logan}. In the case of the Monte--Carlo attack, we first separated out the 'test set' for all datasets and used it only to compute the principal components as described in~\cite{hilprecht2019monte}. $100,000$ synthetic samples ($n$) were used in the Monte--Carlo attack. $10\%$ of the rest of the dataset was then used to train models while the model was evaluated on all the data except the held out test set. Reported numbers are averages over $10$ runs. For each run, $10\%$ of the dataset was randomly chosen to be the training set. In the case of the Monte--Carlo attack, 10 attacks were performed per run of model training and their average accuracy was taken (as in~\cite{hilprecht2019monte}). For the task of evaluating the downstream performance of GANs, a separate generative model was trained for each class of the training dataset. Here the training dataset refers to the pre--defined training set available for MNIST, MNIST--fashion and CIFAR--10.

\section{Results}
\subsection{Comparison of privacy loss under the proposed attacks}
\begin{figure*}[h!]
    \centering
    \includegraphics[scale=0.55]{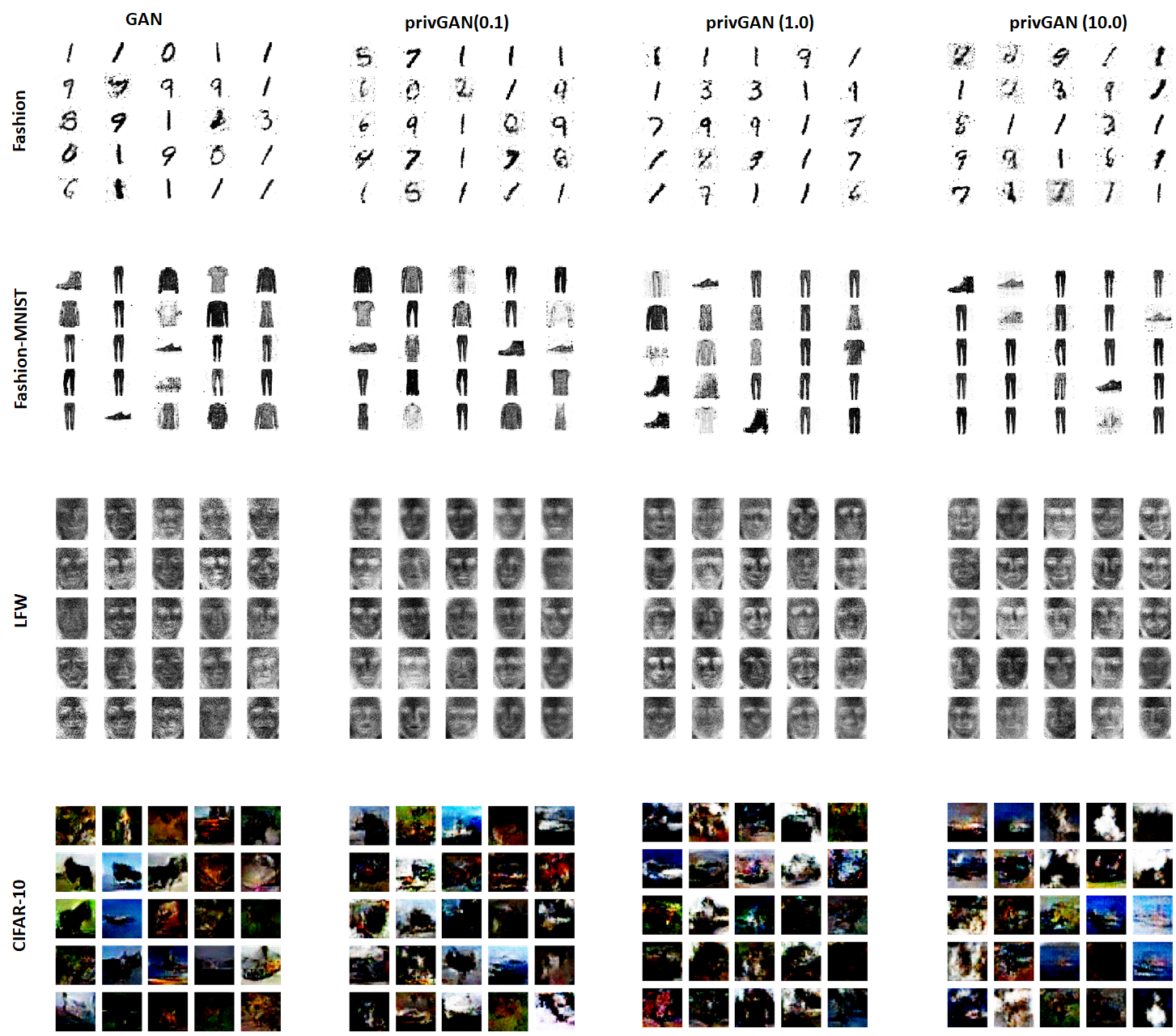}
    \caption{Comparison of images generated by non-private GAN with privGAN for different values of $\lambda$. We see a gradual drop in quality of images with increasing values of $\lambda$.}
    \label{gen-mnist}
\end{figure*}

A qualitative way to evaluate how well GANs are protected against white--box attacks is visually comparing the distribution of discriminator scores for samples in the training set with samples outside of the training set. The more similar the distributions are, the harder it is for an adversary to tell the samples apart. For a privGAN, since there are multiple discriminators, we can look at the outputs of a randomly chosen discriminator instead. In Figure~\ref{pv-loss-1} we see that the privGAN does indeed make the two distributions closer and the similarity between the distributions increases with $\lambda$. On the other hand, for a non--private GAN, the two distributions are very different which explains the high accuracy of white--box attacks in their case. 

To quantitatively compare the privacy loss of privGANs with the baselines, we performed several attacks described in section~\ref{PropAttacks}.  The first is a white--box attack as described previously. In the case of a white--box attack, since the privGAN has multiple generator/discriminator pairs, we describe a modified attack that is designed specifically for privGANs (see Algorithm~\ref{alg:wb-priv}). For each dataset, we train the GAN and privGANs (for $\lambda= 0.1,1,10$) on 10\% of the dataset. The goal of the white--box attack is to then identify the training set from the complete dataset.  Table~\ref{Table1} shows that increasing $\lambda$ generally leads to reduction in the accuracy of white--box attacks. This indicates that a privGAN becomes more resistant to membership inference attacks, as $\lambda$ becomes larger. Moreover, even for a small $\lambda=0.1$, the privGAN leads to substantial decrease in accuracy of the white--box attack when compared to the non--private GAN for all datasets. In all cases, the privGAN model corresponding to the best performing value of $\lambda$ yields comparable performance to the random chance. We also find that for two of the lower values of $\epsilon$ yielding usable images (25, 100), the white--box attack accuracy for DPGANs is similar to privGANs with $\lambda=10$ (see Supplementary Table~\ref{SuppTable1}).  To compare the effect of number of generators $N$ on privacy, we also performed the white-box attack for varying $N$ (2,4,6 and 8) with  $\lambda = 1$ (see Supplementary Table~\ref{SuppTable4}). We see that increasing $N$ generally leads to decrease in accuracy of the white-box attack, except in fashion-MNIST for $N = 8$. This may be either due to the heuristic nature of the attack or because unlike $\epsilon$ in differential privacy, the connection between $\lambda$ or $N$ to privacy is dataset dependent. We hypothesize that for certain datasets, increasing $\lambda$ or $N$ beyond certain optimal values, may cause decrease in sample quality or diversity, leading to lower membership privacy.  For the MNIST and fashion-MNIST datasets, we also compare how white--box attack accuracy varies as a function of number of epochs for privGAN ($\lambda=1$, $N=2$) and non--private GAN (see Supplementary Figures~\ref{priv-epochs-mnist}-\ref{priv-epochs-fmnist}). We find that, while increasing number of epochs increases the white-box attack accuracy for both privGAN and non-private GAN, privGAN performs significantly better than the non-private GAN at the same epoch number (against the white-box attack).
It is worth noting here that increase in privacy loss as a function of epochs is true for even differential privacy based techniques and is not unique to privGAN. 

\begin{table}[h!]
\small
\begin{center}
\begin{tabular}{c|c|c|c|c|c} 
 \hline
 \multicolumn{1}{c|}{Dataset} & \multicolumn{1}{c|}{Rand.} & \multicolumn{1}{c|}{GAN}&\multicolumn{3}{c}{privGAN}\\
 \cline{4-6}
 \multicolumn{1}{c|}{} & \multicolumn{1}{c|}{}& \multicolumn{1}{c|}{} & $\lambda=0.1$ & $\lambda=1$ & $\lambda=10$ 
 \\ \hline
MNIST  & 0.1 & 0.467 & 0.144 & 0.12 & 0.096\\ 
 f-MNIST  & 0.1 & 0.527 & 0.192 & 0.192 & 0.095\\
 LFW & 0.1 & 0.724 & 0.148 & 0.107 & 0.086\\ 
 CIFAR-10 & 0.1 & 0.723 & 0.568 & 0.424 & 0.154\\
 \hline
\end{tabular}
\end{center}
 \caption{White--box attack accuracy of various models on various datasets. For privGAN, the number represents accuracy of the 'max' attack.}
 \label{Table1}
\end{table}

While not a practical attack, the Total Variation Distance between the distribution of scores on the training and held out set provide an upper limit to the efficacy of discriminator score based white--box attacks against GANs (see Algorithm~\ref{alg:tvd}). It can be seen as an attack with an oracle adversary. Like in the previous case, this is complicated in the case of privGAN due to the presence of multiple generator--discriminator pairs. We mitigated this by taking the largest Total Variation Distance among all discriminators (see Algorithm~\ref{alg:tvd-priv}. Similar to the previous attack, we trained the GAN and privGANs (for $\lambda= 0.1,1,10$) on 10\% of the dataset. We find again that for all datasets and all three values of $\lambda$, privGAN leads to considerable reduction in Total Variation Distance (Table~\ref{Table2}). Moreover, an increase in $\lambda$ is seen to generally lead to reduction in the Total Variation Distance. Here too we note that the reduction is not a monotonic as a function of $\lambda$, possibly for the reasons stated previously. Similar to the white-box attack, we also varied the number of generators keeping $\lambda$ fixed (see Supplementary Table~\ref{SuppTable5}). We find that increasing $N$ decreases the Total Variation Distance.

\begin{table}[h!]
\small
\begin{center}
\begin{tabular}{c|c|c|c|c} 
 \hline
 \multicolumn{1}{c|}{Dataset}  & \multicolumn{1}{c|}{GAN}&\multicolumn{3}{c}{privGAN}\\
 \cline{3-5}
 \multicolumn{1}{c|}{} & \multicolumn{1}{c|}{} & $\lambda=0.1$ & $\lambda=1$ & $\lambda=10$ 
 \\ \hline
MNIST   & 0.438 & 0.31 & 0.235 & 0.048\\ 
f-MNIST  & 0.674 & 0.323 & 0.278 & 0.155\\
LFW  & 0.756 & 0.237 & 0.097 & 0.261\\ 
CIFAR-10  & 0.91 & 0.371 & 0.367 & 0.244\\
 \hline
\end{tabular}
\end{center}
 \caption{TVD attack score of various models on various datasets.}
 \label{Table2}
\end{table}

While the previous two attacks are attacks against the discriminator, the final attack is one against the trained generators and only uses synthetic generated images. Moreover, unlike the previous two attacks which were instance membership inference attacks, this is a set membership inference attack (described previously). The attack (Algorithm~\ref{alg:mc}), first described in~\cite{hilprecht2019monte}, is a Monte--Carlo attack that tries to perform set membership inference under the assumption that generated images will be more similar to the image set used to train them. As seen in the previous two attacks, privGAN outperforms GAN for all three values of $\lambda$ and set membership inference accuracy decreases as a function of $\lambda$ (Table~\ref{Table3}). Similar to the two previous attacks, we generally see a drop in attack accuracy as a function of $N$ (see Supplementary Table~\ref{SuppTable6}), with the exception of fashion-MNIST for $N=8$. 

\begin{table}[h!]
\small
\begin{center}
\begin{tabular}{c|c|c|c|c|c} 
 \hline
 \multicolumn{1}{c|}{Dataset} & \multicolumn{1}{c|}{Rand.} & \multicolumn{1}{c|}{GAN}&\multicolumn{3}{c}{privGAN}\\
 \cline{4-6}
 \multicolumn{1}{c|}{} & \multicolumn{1}{c|}{}& \multicolumn{1}{c|}{} & $\lambda=0.1$ & $\lambda=1$ & $\lambda=10$ 
 \\ \hline
MNIST  & 0.5 & 0.79 & 0.71 & 0.68 & 0.56\\ 
 f-MNIST  & 0.5 & 0.75 & 0.73 & 0.7 & 0.64\\
 LFW & 0.5 & 0.77 & 0.66 & 0.57 & 0.55\\ 
 CIFAR-10 & 0.5 & 0.62 & 0.61 & 0.56 & 0.52\\
 \hline
\end{tabular}
\end{center}
 \caption{Monte--Carlo attack accuracy of various models on various datasets.}
 \label{Table3}
\end{table}

\subsection{Comparison of downstream performance against non-private GANs}
We compare the downstream performance of privGANs against non--private GANs in two ways: i) qualitative comparison of the generated images, ii) quantitative comparison on a downstream classification task.

\begin{figure}[h!]
    \centering
    \includegraphics[scale=0.41]{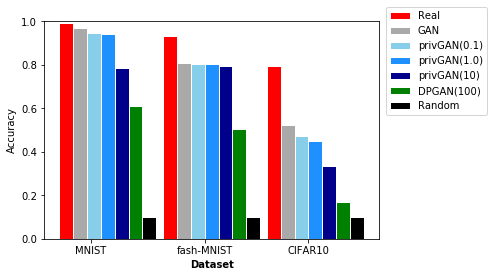}
    \caption{Comparison of test--set performance of CNN models trained real data, synthetic data generated using GAN and synthetic data generated using privGAN. Numbers in brackets indicate $\lambda$ values for privGAN and $\epsilon$ values for DPGAN.}
    \label{down-acc}
\end{figure}

For the first task we qualitatively compare the quality of images generated by privGANs with different settings of $\lambda$ to those generated by non--private GANs as seen in Figure~\ref{gen-mnist}. It is easy to see that the image quality for all three $\lambda$ values (0.1, 1, 10) are quite comparable to the images generated by non--private GANs. However, it can be seen that the image quality does decrease as we increase $\lambda$. We also see that certain classes become overrepresented as $\lambda$ increases. This will be studied in greater detail in the following section. 

To quantitatively test the downstream performance, we split the pre--defined training set for MNIST, MNIST--fashion and CIFAR by its class and trained privGANs ($\lambda = 0.1, 1, 10$) for each single class. We then generated the same number of samples per class as in the original training set to create a new synthetic training set(each image was generated by a randomly chosen generator). This training set was used to train a CNN classification model, which was then tested on the pre--defined test sets for each dataset. The baselines used for comparison were: i) CNN trained on the real training set, ii) CNN trained on a training set generated by a non--private GAN. Figure~\ref{down-acc} shows that the classification accuracy decreases, as $\lambda$ increases. However, the decrease is almost negligible with $\lambda=0.1,1$ compared to the non--private GAN for both MNIST and MNIST--fashion. Besides, a comparison with DPGAN (for $100$) shows that privGAN leads to far higher downstream utility than DPGAN. While privGANs outperforms DPGANs for all three values of $\lambda$, we note that the comparison is not fair since DPGAN is known to provide a rather conservative privacy guarantees and isn't specifically designed with membership privacy in mind. Hence, a more fair comparison is between the privacy providing mechanism used in DPGAN (gradient clipping with gaussian noise addition) with that of privGAN. To do this, we do a white--box accuracy vs downstream accuracy plot for a wide range of $\lambda$ (for privGAN) and $\epsilon$ (for DPGAN) values. (Note: This experiment was only performed on MNIST and fashion--MNIST due to computational considerations) We find that in both datasets, privGAN has higher utility for similar privacy for most of the privacy loss range as seen in Figure~\ref{priv-util}. It is also interesting to note that the privacy loss range for privGAN is somewhat smaller than DPGAN. This may indicate that simply sub-sampling the data already provides some amount of membership privacy. We tried to replicate this experiment for PATE-GAN~\cite{jordon2018pate} but were unable to generate reasonable samples, hence we don't report the results here. The implementation details and experiment results for PATE-GAN can be found in supplementary methods.

\begin{figure}[h!]
    \centering
    \includegraphics[scale=0.6]{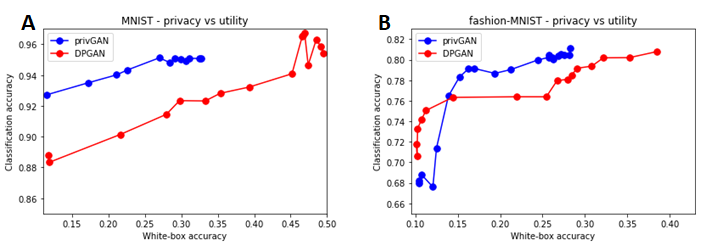}
    \caption{Comparison of utility vs privacy between privGAN and DPGAN.}
    \label{priv-util}
\end{figure}

\subsection{Effect of privGAN hyperparameters on sample quality}
\begin{figure}[h!]
    \centering
    \includegraphics[scale=0.75]{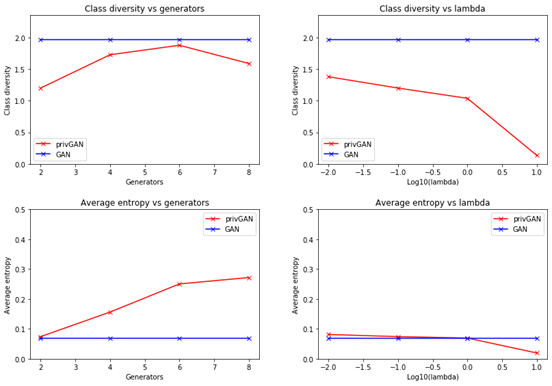}
    \caption{Comparison of the effect of hyperparameters on  average entropy and class diversity for MNIST.}
    \label{hyp-mnist}
\end{figure}

To test the effect of the hyperparameter choices on sample quality we focus on two attributes: i) unambiguity of the class of the generated images, ii) relative abundance of different classes in generated images. We measure the unambiguity of the class of the generated images using the entropy of the predicted class probabilities (using a CNN trained on real images). The average entropy of the entire dataset is then reported for different hyperparameter settings (lower average entropy represents less ambiguity of image class). The class diversity of generated images is measured by using the pre--trained CNN to first identify the most probable class per sample and then using it to calculate the relative abundance of each class in the generated dataset (scaled to sum to 1). We then calculate the entropy of the relative class abundance which we report as the class diversity (higher entropy represents larger class diversity). 

We see in Figure~\ref{hyp-mnist} that as $\lambda$ (fixing number of generators to 2) is increased, both average entropy and class diversity monotonically decrease. This implies that as $\lambda$ increases, the class ambiguity of the samples increases, while the class diversity decreases. As the number of generators is increased (fixing $\lambda=0.1$) we notice a monotonic increase of average entropy. This increase in average entropy is accompanied by an increase in class diversity (although the increase is not monotonic). This in turn implies that as the number of generators increases, the class ambiguity of samples decreases along with an increase in class diversity. Here it must be noted that as the number of generators is increased (for a fixed dataset size), the size of each data split decreases.  

Based on these results, it can be summarized that both $\lambda$ and the number of generators impact the quality of samples generated by privGAN. Since these two parameters interact, the optimal value of these hyperparameters are inter--dependent and most likely dependent on the dataset. 

\section{Practical Considerations For Data/Model Sharing}
 While the interplay between privacy and synthetic data sharing has been extensively discussed academically~\cite{bellovin2019privacy}, there is a lack of discussion on when it is appropriate to share models (and synthetic data) and when it is recommended to share just synthetic data. These release choices can dramatically affect privacy concerns as different attacks against GANs can have vastly different success rates. In our case this is further complicated because privGAN has multiple generator discriminator pairs. Here we discuss some practical considerations for synthetic data/model sharing in the case of privGAN. 

\subsection{Sharing only synthetic data}
Sharing only synthetic data is the most desirable option when possible, as it only allows for black box attacks, which are generally considered less effective than white box attacks for GANs~\cite{hayes2019logan}. While synthetic data sharing is relatively straightforward for non--private GANs, it is more complicated in the case of privGAN due to the presence of multiple generators, each of which may have lead to a slightly different generated distribution. Releasing images generated by any one generator may reduce the diversity of shared data, particularly for low values of $\lambda$. A smart sampling strategy (e.g. each image can be generated from a randomly sampled generator) from multiple generators would increase the diversity of the generated samples and would likely make it harder for adversaries to construct a black box attack. A relative privacy vs utility comparison of these approaches will be explored in future work. 

\subsection{Model sharing}
While sharing only synthetic data should be generally more preferable to model sharing from a membership privacy standpoint, in many situations sharing the model might become necessary. Moreover, it is standard practice in many academic fields to share models for the sake of reproducible research. In such circumstances, it is strongly recommended to not share the $D_p$ (privacy discriminator) module of privGAN. Since $D_p$ serves the role of an in--built adversary in privGAN, it's ability to predict which generator generated a particular synthetic sample could be utilized by an adversary to infer membership. Additionally, one may consider releasing one or a randomly selected sub--set of generator--discriminator pairs instead of all pairs to further enhance membership privacy. 

\subsection{Certifying model/data privacy}
While privGAN does not provide any privacy guarantees similar to differential privacy based techniques, some certificate of membership privacy can often be required by data/model owners as well as regulators. One could use the performance of realistic SOTA membership inference attacks against models or synthetic datasets as a way to empirically quantify membership privacy vulnerability of models. A more principled approach would be to use new privacy evaluation frameworks such as~\cite{liu2020mace}, that uses a Bayes Optimal Classifier as an optimal adversary and provides a dataset dependent and query specific membership privacy loss estimate (along with confidence intervals). This is a generalization of the Oracle attack introduced in this paper and is shown in~\cite{liu2020mace} to be related to differential privacy. Although the optimal/Oracle adversary is an impractically strong adversary, the performance estimate of such an adversary can act as a post-hoc certificate for models/synthetic datasets. A data/model owner may decide to obtain such certificates for several different queries relevant to their application, to understand overall susceptibility of the model/synthetic data.

\subsection{Hyperparameter choices}
For practical deployment purposes, we suggest users first optimize hyperparameters for a non-private GAN (including number of epochs) and then set the privGAN hyperparameters $\lambda$ and $N$. As evident from the experiment results, as well as the mathematical formulation, there is no dataset independent privacy interpretation for the hyperparameters $\lambda$ and $N$.  However, choice of these parameters affects both query specific membership privacy as well as downstream utility of the model/synthetic data. For the purposes of practical data/model sharing, we recommend performing grid search (or similar techniques) to identify optimal hyperparameters corresponding to query specific privacy (see above) and downstream utility (using the users choice of utility metric). These should allow users to identify privGAN hyperparameters that provide adequate privacy and utility for their application. Future work can be aimed at developing techniques for automatic identification of hyperparameters for a particular query and utility definition.   

\section{Conclusion}
Here we present a novel GAN framework (named privGAN) that utilizes multiple generator-discriminator pairs and a built--in adversary to prevent the model from memorizing the training set. Through a theoretical analysis of the optimal generator/discriminators, we demonstrate that the results are identical to those of a non--private GAN. We also demonstrate in the more practical scenario where the training data is a sample of the entire dataset, the privGAN loss function is equivalent to a regularization to prevent memorization of the training set. The regularization provided by privGAN could also lead to an improved learning of the data distribution, which will be the focus of future work.

To demonstrate the utility of privGAN, we focus on the application of preventing membership inference attacks against GANs. We demonstrate empirically that while non--private GANs are highly vulnerable to such attacks, privGAN provides relative protection against such attacks. While we focus on a few state--of--the--art white--box and black--box attacks in this paper, we argue that due to the intrinsic regularization effect provided by privGAN (see Section~\ref{privGAN:reg}) this would generalize to other attacks as well. We also demonstrate that compared to another popular defense against such attacks (DPGAN, PATE-GAN), privGAN minimally affects the quality of downstream samples as evidenced by the performance on downstream learning tasks such as classification. We also characterize the effect of different privGAN hyperparameters on sample quality, measured through two different metrics. 

While the major focus of the current paper has been to characterize the properties of privGAN and empirically show the protection it provides to white--box attacks, future work could focus on theoretically quantifying the membership privacy benefits due to privGAN. Specifically, it would be useful to investigate connections between the privGAN hyperparameters ($N$ and $\lambda$) and dataset memorization. The privGAN architecture could also have applications in related areas such as federated learning. Hence another direction of future work could be focused on extending privGAN to such application areas and demonstrating the benefits in practical datasets.

\bibliography{example_paper}
\bibliographystyle{icml2019}

\section*{Appendix 1: Model Architectures and Hyperparameters}
Here we outline the different layers used in the model architectures for different datasets, along with associated optimization hyperparameters. It is important to note that the same choices are made for non--private GAN, privGAN as well as DPGAN in all cases. Note that layers are in sequential order. 

\subsection*{MNIST \& MNIST--fashion}
\subsubsection*{Generator layers}
\begin{itemize}
\itemsep0em
    \item Dense(units$=512$, input size$=100$)
    \item LeakyReLU($\alpha=0.2$)
    \item Dense(units$=512$)
    \item LeakyReLU($\alpha=0.2$)
    \item Dense(units$=1024$)
    \item LeakyReLU($\alpha=0.2$)
    \item Dense(units$=784$, activation = 'tanh')
\end{itemize}

\subsubsection*{Discriminator layers}
\begin{itemize}
\itemsep0em
    \item Dense(units$=2048$)
    \item LeakyReLU($\alpha=0.2$)
    \item Dense(units$=512$)
    \item LeakyReLU($\alpha=0.2$)
    \item Dense(units$=256$)
    \item LeakyReLU($\alpha=0.2$)
    \item Dense(units$=1$, activation = 'sigmoid')
\end{itemize}

\subsubsection*{Privacy--Discriminator layers}
\begin{itemize}
\itemsep0em
    \item Dense(units$=2048$)
    \item LeakyReLU($\alpha=0.2$)
    \item Dense(units$=512$)
    \item LeakyReLU($\alpha=0.2$)
    \item Dense(units$=256$)
    \item LeakyReLU($\alpha=0.2$)
    \item Dense(units = number of generators, activation = 'softmax')
\end{itemize}

An Adam optimizer with $\beta = 0.5$ and a learning rate of $0.0002$ was used for optimization.

\subsection*{LFW}
\subsubsection*{Generator layers}
\begin{itemize}
\itemsep0em
    \item Dense(units$=512$, input size$=100$)
    \item LeakyReLU($\alpha=0.2$)
    \item Dense(units$=512$)
    \item LeakyReLU($\alpha=0.2$)
    \item Dense(units$=1024$)
    \item LeakyReLU($\alpha=0.2$)
    \item Dense(units$=2914$, activation = 'tanh')
\end{itemize}

\subsubsection*{Discriminator layers}
\begin{itemize}
\itemsep0em
    \item Dense(units$=2048$)
    \item LeakyReLU($\alpha=0.2$)
    \item Dense(units$=512$)
    \item LeakyReLU($\alpha=0.2$)
    \item Dense(units$=256$)
    \item LeakyReLU($\alpha=0.2$)
    \item Dense(units$=1$, activation = 'sigmoid')
\end{itemize}

\subsubsection*{Privacy--Discriminator layers}
\begin{itemize}
\itemsep0em
    \item Dense(units$=2048$)
    \item LeakyReLU($\alpha=0.2$)
    \item Dense(units$=512$)
    \item LeakyReLU($\alpha=0.2$)
    \item Dense(units$=256$)
    \item LeakyReLU($\alpha=0.2$)
    \item Dense(units = number of generators, activation = 'softmax')
\end{itemize}

An Adam optimizer with $\beta = 0.5$ and a learning rate of $0.0002$ was used for optimization.

\subsection*{CIFAR--10}
\subsubsection*{Generator layers}
\begin{itemize}
\itemsep0em
    \item Dense(units$=2048$, input size$=100$, target shape$= (2, 2, 512)$)
    \item Conv2DTranspose(filters$= 256$, kernel size$=5$, strides$=2$)
    \item LeakyReLU($\alpha=0.2$)
    \item Conv2DTranspose(filters$= 128$, kernel size$=5$, strides$=2$)
    \item LeakyReLU($\alpha=0.2$)
    \item Conv2DTranspose(filters$= 64$, kernel size$=5$, strides$=2$)
    \item LeakyReLU($\alpha=0.2$)
    \item Conv2DTranspose(filters$= 3$, kernel size$=5$, strides$=2$, activation = 'tanh')
\end{itemize}

\subsubsection*{Discriminator layers}
\begin{itemize}
\itemsep0em
    \item Conv2D(filters$= 64$, kernel size$=5$, strides$=2$)
    \item Reshape(target shape$= (2, 2, 512)$)
    \item Conv2D(filters$= 128$, kernel size$=5$, strides$=2$)
    \item LeakyReLU($\alpha=0.2$)
    \item Conv2D(filters$= 128$, kernel size$=5$, strides$=2$)
    \item LeakyReLU($\alpha=0.2$)
    \item Conv2D(filters$= 256$, kernel size$=5$, strides$=2$)
    \item LeakyReLU($\alpha=0.2$)
    \item Dense(units$=1$, activation = 'sigmoid')
\end{itemize}

\subsubsection*{Privacy--Discriminator layers}
\begin{itemize}
\itemsep0em
    \item Conv2D(filters$= 64$, kernel size$=5$, strides$=2$)
    \item Reshape(target shape$= (2, 2, 512)$)
    \item Conv2D(filters$= 128$, kernel size$=5$, strides$=2$)
    \item LeakyReLU($\alpha=0.2$)
    \item Conv2D(filters$= 128$, kernel size$=5$, strides$=2$)
    \item LeakyReLU($\alpha=0.2$)
    \item Conv2D(filters$= 256$, kernel size$=5$, strides$=2$)
    \item LeakyReLU($\alpha=0.2$)
    \item Dense(units = number of generators, activation = 'softmax')
\end{itemize}

An Adam optimizer with $\beta = 0.5$ and a learning rate of $0.0002$ was used for optimization.

\subsection*{CNN classifier for MNIST \& MNIST--fashion}
\begin{itemize}
\itemsep0em
    \item Conv2D(filters$= 32$, kernel size$=3$, activation = 'relu')
    \item Conv2D(filters$= 32$, kernel size$=3$, activation = 'relu')
    \item Max--pooling(pool size$=2$)
    \item Dense(units$= 128$, activation = 'relu')
    \item Dense(units$= 10$, activation = 'soft--max')
\end{itemize}
An Adam optimizer with $\beta = 0.5$ and a learning rate of $0.0002$ was used for optimization.

\subsection*{CNN classifier for CIFAR--10}
\begin{itemize}
\itemsep0em
    \item Conv2D(filters$= 32$, kernel size$=3$, activation = 'relu')
    \item Conv2D(filters$= 32$, kernel size$=3$, activation = 'relu')
    \item Max--pooling(pool size$=2$)
    \item Dropout($0.25$)
    \item Conv2D(filters$= 64$, kernel size$=3$, activation = 'relu')
    \item Conv2D(filters$= 64$, kernel size$=3$, activation = 'relu')
    \item Max--pooling(pool size$=2$)
    \item Dropout($0.25$)
    \item Dense(units$= 512$, activation = 'relu')
    \item Dropout($0.5$)
    \item Dense(units$= 10$, activation = 'soft--max')
\end{itemize}
An Adam optimizer with $\beta = 0.5$ and a learning rate of $0.0002$ was used for optimization.

\subsection*{DPGAN hypterparameters and implementation}
To make the architectures identical, we replaced the Wasserstein loss in DPGAN with the original GAN loss. In the implementation of DPGAN, there are several additional hyperparameters. In all our experiments, we have set  $\delta = 1/N$, where $N$ is the sample size of the dataset (this is considered standard practice). Furthermore, for each discriminator iteration, we performed $1$ iteration of the generator for the sake of consistency with GAN and privGAN. DPGAN was implemented using the Tensorflow Privacy package (https://github.com/tensorflow/privacy).  

\subsection*{PATE-GAN implementation}
The generator architecture for PATE-GAN was chosen to be identical to a simple GAN. The Teacher and Student discriminators were chosen to have the same architecture as a simple GAN discrminator. The number of teacher discrminators was set to $5$, following the default values found in the bitbucket repository of project (\href{https://bitbucket.org/mvdschaar/mlforhealthlabpub/src/40de054cab8f945fd0a7e6846d53605c0bcc058e/alg/pategan/}{Link}). The number of inner student and teacher discriminator iterations ($n_S$ and $n_T$) were set to $5$, also following default settings. The total number of iterations was chosen to identical to other types of GANs ($privGAN$, $DPGAN$) for different values of $\lambda \in [0.01,50]$. The white--box attacks reported here were performed against the student discrminator of PATE-GAN. We note here that using these settings we were simply unable to generate any recognizable images using PATE-GAN. Furthermore, we were not able to find any examples in published/archived literature (or even blogs) where PATE-GAN has been used for image generation. On the contrary, others have noted that it is not suitable for image data generation~\cite{fan2020survey}. 

\section*{Appendix 2: Effects of Hyperparameter Choices on Membership Privacy}
Note: For experiments comparing different number of generators, LFW has been left out since the small size of the dataset leads to partition size being smaller than batch size for $n=6,8$. For experiments comparing different number of epochs, we restrict our comparison to MNIST and fashion-MNIST due to computational considerations. 

\begin{table}[h!]
\small
\begin{center}
\begin{tabular}{ c|c|c|c|c } 
 \hline
 Dataset & $n=2$ & $n=4$ & $n=6$ & $n=8$
 \\ \hline
 MNIST  & 0.12 & 0.095 &0.098 & 0.08\\ 
 f-MNIST  & 0.194 & 0.17 & 0.14 & 0.126\\
 CIFAR-10 & 0.424 & 0.139 & 0.11 & 0.101\\
 \hline
\end{tabular}
\end{center}
 \caption{White--box attack accuracy against privGAN for different number of generator/discriminator pairs. $\lambda=1$ in all cases.}
 \label{SuppTable4}
\end{table}

\begin{table}[h!]
\small
\begin{center}
\begin{tabular}{ c|c|c|c|c } 
 \hline
 Dataset & $n=2$ & $n=4$ & $n=6$ & $n=8$
 \\ \hline
 MNIST  & 0.235 & 0.084 & 0.054 & 0.03\\ 
 f-MNIST  & 0.278 & 0.235 & 0.166 & 0.101\\
 CIFAR-10 & 0.367 & 0.247 & 0.138 & 0.056\\
 \hline
\end{tabular}
\end{center}
 \caption{TVD attack score against privGAN for different number of generator/discriminator pairs. $\lambda=1$ in all cases.}
 \label{SuppTable5}
\end{table}

\begin{table}[h!]
\small
\begin{center}
\begin{tabular}{ c|c|c|c|c } 
 \hline
 Dataset & $n=2$ & $n=4$ & $n=6$ & $n=8$
 \\ \hline
 MNIST  & 0.68 & 0.53 & 0.6 & 0.57\\ 
 f-MNIST  & 0.7 & 0.69 & 0.5 & 0.6\\
 CIFAR-10 & 0.56 & 0.55 & 0.56 & 0.52\\
 \hline
\end{tabular}
\end{center}
 \caption{Monte--Carlo attack accuracy against privGAN for different number of generator/discriminator pairs. $\lambda=1$ in all cases.}
 \label{SuppTable6}
\end{table}

\begin{figure}[H]
    \centering
    \includegraphics[scale=0.4]{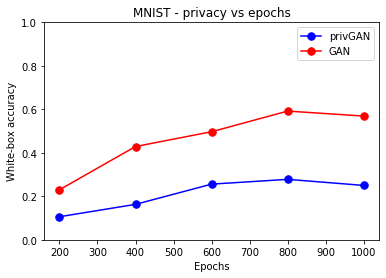}
    \caption{Comparison of white--box privacy loss between privGAN and GAN for different number of training epochs (for MNIST). $\lambda=1$  and $N=2$ for privGAN.}
    \label{priv-epochs-mnist}
\end{figure}

\begin{figure}[H]
    \centering
    \includegraphics[scale=0.4]{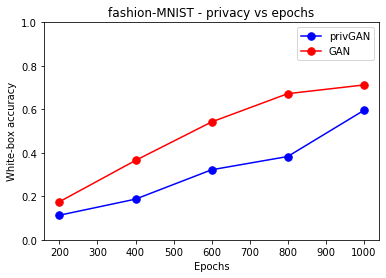}
    \caption{Comparison of white--box privacy loss between privGAN and GAN for different number of training epochs (for fashion--MNIST).  $\lambda=1$  and $N=2$ for privGAN.}
    \label{priv-epochs-fmnist}
\end{figure}

\onecolumn

\section*{Appendix 4: Complete Table of White--Box Attack Efficiencies}

\begin{table*}[h!]
\small
\begin{center}
\begin{tabular}{ c|c|c|c|c|c|c|c } 
 \hline
 Dataset & Rand. & GAN & privGAN ($\lambda=0.1$) & privGAN ($\lambda=1.0$) & privGAN ($\lambda=10.0$)  & DPGAN ($\epsilon=100$) & DPGAN ($\epsilon=25$)
 \\ \hline
 MNIST  & 0.1 & 0.467 & 0.144 & 0.12 & 0.096 & 0.098 & 0.1\\ 
 f-MNIST  & 0.1 & 0.527 & 0.192 & 0.192 & 0.095  & 0.102 & 0.099\\
 LFW & 0.1 & 0.724 & 0.148 & 0.107 & 0.086 & 0.109 & 0.097\\ 
 CIFAR-10 & 0.1 & 0.723 & 0.568 & 0.424 & 0.154 & 0.107 & 0.098\\
 \hline
\end{tabular}
\end{center}
 \caption{White box attack accuracy of various models on various datasets.}
 \label{SuppTable1}
\end{table*}

\begin{table*}[h!]
\small
\begin{center}
\begin{tabular}{ c|c|c|c|c|c|c } 
 \hline
 Dataset & GAN & privGAN ($\lambda=0.1$) & privGAN ($\lambda=1.0$) & privGAN ($\lambda=10.0$)  & DPGAN ($\epsilon=100$) & DPGAN ($\epsilon=25$)
 \\ \hline
 MNIST  & 0.438 & 0.31 & 0.235 & 0.048 & 0.021 & 0.024\\ 
 f-MNIST  & 0.674 & 0.323 & 0.278 & 0.155 & 0.022 & 0.025\\
 LFW & 0.756 & 0.237 & 0.097 & 0.261 & 0.04 & 0.045\\ 
 CIFAR-10 & 0.91 & 0.371 & 0.367 & 0.244 & 0.01 & 0.008\\
 \hline
\end{tabular}
\end{center}
 \caption{TVD attack score of various models on various datasets.}
 \label{SuppTable2}
\end{table*}

\begin{table*}[h!]
\small
\begin{center}
\begin{tabular}{ c|c|c|c|c|c|c|c } 
 \hline
 Dataset & Rand. & GAN & privGAN ($\lambda=0.1$) & privGAN ($\lambda=1.0$) & privGAN ($\lambda=10.0$)  & DPGAN ($\epsilon=100$) & DPGAN ($\epsilon=25$)
 \\ \hline
 MNIST  & 0.5 & 0.79 & 0.71 & 0.68 & 0.56 & 0.62 & 0.6\\ 
 f-MNIST  & 0.5 & 0.75 & 0.73 & 0.7  & 0.64 & 0.52 & 0.56\\
 LFW & 0.5 & 0.77 & 0.66 & 0.57 & 0.55 & 0.59 & 0.49\\ 
 CIFAR-10 & 0.5 & 0.62 & 0.61 & 0.56 & 0.52 & 0.63 & 0.57\\
 \hline
\end{tabular}
\end{center}
 \caption{Monte--Carlo attack accuracy of various models on various datasets.}
 \label{SuppTable3}
\end{table*}

\section*{Appendix 3: Privacy vs Utility Plots for PATE-GAN}
\begin{figure}[h!]
    \centering
    \includegraphics[scale=0.4]{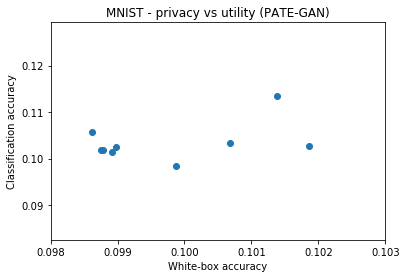}
    \caption{Utility vs privacy for PATE--GAN on the MNIST dataset.}
    \label{pate-mnist}
\end{figure}

\begin{figure}[h!]
    \centering
    \includegraphics[scale=0.4]{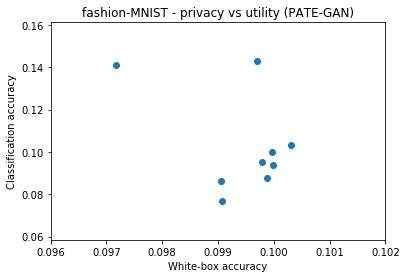}
    \caption{Utility vs privacy for PATE--GAN on the fashion--MNIST dataset.}
    \label{pate-fmnist}
\end{figure}
\end{document}